\newtheorem{thm}{Theorem}
\newtheorem{lem}{Lemma}
\newtheorem{ans}{Ansatz}
\newcommand{\beq}{\begin{equation}}
\newcommand{\eeq}{\end{equation}}
\newcommand{\ba}{\begin{array}}
\newcommand{\ea}{\end{array}}
\newcommand{\bea}{\begin{eqnarray}}
\newcommand{\eea}{\end{eqnarray}}
\newcommand{\bc}{\begin{center}}
\newcommand{\ec}{\end{center}}
\newcommand{\bt}{\begin{tabular}}
\newcommand{\et}{\end{tabular}}
\newcommand{\bi}{\begin{itemize}}
\newcommand{\ei}{\end{itemize}}
\newcommand{\bd}{\begin{description}}
\newcommand{\ed}{\end{description}}
\newcommand{\bp}{\begin{pmatrix}}
\newcommand{\ep}{\end{pmatrix}}
\newcommand{\gnorm}[1]{\left|\left| #1\right|\right|}
\newcommand{\dstream}{\left\{{\bf y}_{j}\right\}_{j=1}^{N_{T}+1}}
\title{Deep Learning Enhanced Dynamic Mode Decomposition}
\author[1,2,3]{Daniel J.  Alford-Lago\thanks{daniel.j.alford-lago.civ@us.navy.mil}}
\author[2]{Christopher W. Curtis}
\author[3]{Alexander T. Ihler}
\author[4]{Opal Issan}
\affil[1]{Naval Information Warfare Center Pacific, San Diego, CA, 92152}
\affil[2]{Department of Mathematics and Statistics, San Diego State University, San Diego, CA, 92182}
\affil[3]{Department of Computer Science,University of California Irvine, Irvine, CA, 92697}
\affil[4]{Department of Mechanical and Aerospace Engineering, University of California San Diego, San Diego, CA, 92093}
\date{}
\begin{document}
\maketitle

\begin{abstract}
Koopman operator theory shows how nonlinear dynamical systems can be represented as an infinite-dimensional, linear operator acting on a Hilbert space of observables of the system. However, determining the relevant modes and eigenvalues of this infinite-dimensional operator can be difficult. The extended dynamic mode decomposition (EDMD) is one such method for generating approximations to Koopman spectra and modes, but the EDMD method faces its own set of challenges due to the need of user defined observables. To address this issue, we explore the use of autoencoder networks to simultaneously find optimal families of observables which also generate both accurate embeddings of the flow into a space of observables and submersions of the observables back into flow coordinates. This network results in a global transformation of the flow and affords future state prediction via the EDMD and the decoder network. We call this method the deep learning dynamic mode decomposition (DLDMD). The method is tested on canonical nonlinear data sets and is shown to produce results that outperform a standard DMD approach and enable data-driven prediction where the standard DMD fails.
\end{abstract}

\section{\label{sec:level1}Introduction:\protect}
One particular collection of methods that has garnered significant attention and development is those built around the approximation of Koopman operators \cite{koopman}, which are broadly described as dynamic mode decomposition (DMD).  These methods in some sense fit within the larger context of modal decompositions \cite{wilcox, taira}, but in several ways, they go further insofar as they also readily generate proxies for the flow maps connected to the otherwise unknown but underlying dynamical systems generating the data in question.  The first papers in this area \cite{schmid, mezic1} showed across a handful of problems that with a surprisingly straightforward approach, one could readily generate accurate models with nothing but measured time series.  Further extensions soon appeared by way of the extended DMD (EDMD) and the kernel DMD (KDMD) \cite{williams, williams2} which generalized the prior approaches in such a way to better exploit the approximation methodologies used in Reproducing Kernel Hilbert Spaces. These methods were used to develop an all purpose approach to model discovery \cite{kutz2}. See \cite{kutz} for a textbook presentation and summary of what one might describe as the first wave of DMD literature.

However, in the above methods, user choices for how to represent data are necessary, and as shown in \cite{kutz3}, even what appear to be natural choices can produce misleading or even outright incorrect results. In response, researchers have brought several supervised learning, or what one might broadly call machine learning (ML), approaches to bear on the problem of finding optimal representations of data for the purpose of approximating Koopman operators. These approaches have included dictionary learning approaches \cite{bollt2, takeishi, yeung} and related kernel learning methods \cite{degennaro}. Likewise, building on theoretical results in \cite{budisic, bollt1}, more general methods seeking the discovery of optimal topological conjugacies of the flow have been developed \cite{lusch, bramburger2021deep}. Related work on discovering optimal embeddings of chaotic time series \cite{gilpin} and ML driven model discovery \cite{lusch2, mardt, erichson-2019} have also recently appeared. 

Between the dictionary learning and topological conjugacy approaches then, in this work we present a hybrid methodology that uses ML to discover optimal dictionaries of observables which also act as embeddings of the dynamics that enforce both one-time step accuracy as well as global stability in the approximation of the time series. A similar philosophy is used in \cite{azencot}. However, in that work autoencoders are used to obtain low-dimensional structure from high-dimensional data; see also the results in \cite{lusch2, Li2020Learning}. In contrast, in this paper we look at low-dimensional dynamical systems and use autoencoders to find optimal embeddings of the phase-space coordinates. Running EDMD on the embedded coordinates then generates global linearizations of the flow which are characterized by a single set of eigenvalues that govern the dynamics of a given trajectory. Our spectral results may be contrasted with that of \cite{lusch}, in which their autoencoder is paired with an auxiliary network that parameterizes new eigenvalues at each point along the trajectory. We forego this additional parameterization and obtain a more clearly global spectral representation for any given path in the flow.

To assess the validity and robustness of our approach, we explore a number of examples motivated by several different phase-space geometries. We examine how our method deals with the classic harmonic oscillator, the Duffing system, the multiscale Van der Pol oscillator, and the chaotic Lorenz-63 equations. This presents a range of problems in which one has a global center, two centers, slow/fast dynamics on an attractive limit cycle, and finally a strange attractor. As we show, our method is able to learn across these different geometries, thereby establishing confidence that it should be adaptable to more complex problems.  While accuracy for test data is high for the planar problems, as expected, the chaotic dynamics in Lorenz-63 cause prediction accuracy to degrade relatively quickly. Nevertheless, with a minimal number of assumptions and guidance in our learning algorithm, we are able to generate a reasonable approximation to the strange attractor, which shows our method should be applicable in a wider range of use cases than those explored here.

In Section \ref{sec:koopman}, we introduce the Koopman operator and outline the DMD for finding a finite-dimensional approximation to the Koopman operator. The role of deep autoencoders in our method is presented in Section \ref{sec:autoencoder}, and our algorithm and implementation details are explained in Section \ref{sec:algorithm}.  Results are presented in Section \ref{sec:results}. Discussion of results and explorations of future directions are presented in Section \ref{sec:conclusionsfuture}.

\section{\label{sec:koopman}The Koopman Operator and Dynamic Mode Decomposition}
In this work, we seek to generate approximate predictive models for time series, say $\dstream$, which are generated by an unknown dynamical system of the form
\begin{equation}\label{eqn:dynsys}
    \frac{d}{dt}{\bf y}(t) = f({\bf y}(t)), \qquad {\bf y}(0) = {\bf x} \in \mathcal{M} \subseteq \mathbb{R}^{N_{s}},
\end{equation}
where $\mathcal{M}$ is some connected, compact subset of $\mathbb{R}^{N_{s}}$.  We denote the affiliated flow of this equation as ${\bf y}(t) = \varphi(t;{\bf x})$, and {\it observables} of the state as $g({\bf y}(t))$, where $g: \mathcal{M} \mapsto \mathbb{C}$.  To build these approximate predictive models, we build on the seminal work in \cite{koopman} which shows that there exists a linear representation of the flow map given by the {\it Koopman} operator $\mathcal{K}^{t}$ where
\begin{equation} \label{eqn:koop1}
    \mathcal{K}^{t}g({\bf x}) =  g\left(\varphi(t;{\bf x})\right).
\end{equation}

Linearity is gained at the cost of turning the problem of predicting a finite-dimensional flow into one of examining the infinite-dimensional structure of the linear operator $\mathcal{K}^{t}$.  To wit, if we consider the observables $g$ to be members of the associated Hilbert space of {\it observables}, say $L_{2}\left(\mathcal{M},\mathbb{C}\right)$, $g \in L_{2}\left(\mathcal{M},\mathbb{C}\right)$ if $\gnorm{g}_{2}<\infty$ where $\gnorm{g}_{2}$ is given by 
\[
\gnorm{g}_{2}^{2} = \int_{\mathcal{M}} \left|g({\bf x})\right|^{2} d {\bf x}.
\]
For concreteness, we have chosen to integrate with respect to Lebesgue measure, though of course others could be more convenient or appropriate.  We see then that this makes the Koopman operator $\mathcal{K}^{t}$ an infinite-dimensional map such that  
\[
\mathcal{K}^{t}:L_{2}\left(\mathcal{M},\mathbb{C}\right)\rightarrow L_{2}\left(\varphi_{-t}\left(\mathcal{M}\right),\mathbb{C}\right).
\]
In this case, one has 
\begin{align*}
\gnorm{\mathcal{K}^{t}g}^{2}_{2} = & \int_{\varphi_{-t}\left(\mathcal{M}\right)}\left|g(\varphi(t;{\bf x}))\right|^{2}d{\bf x}, \\
= & \int_{\mathcal{M}}\left|g({\bf x})\right|^{2}J^{-1}(t;{\bf x})d{\bf x}, \\
\leq & \left(\sup_{{\bf x}\in\mathcal{M}}J^{-1}(t;{\bf x})\right) \gnorm{g}^{2}_{2},
\end{align*}
where 
\[
J(t;{\bf x}) = \left|\mbox{det}\left(D_{{\bf x}}\varphi(t;{\bf x})\right)\right|.
\]
Based on the above computations, we observe that the Koopman operator is an isometry for volume preserving flows.  

If we further suppose that $\mathcal{M}$ is invariant with respect to the flow, or $\varphi_{t}(\mathcal{M})\subset \mathcal{M}$ for $t>0$, we can simplify the above statement so that 
\[
\mathcal{K}^{t}:L_{2}\left(\mathcal{M},\mathbb{C} \right)\rightarrow L_{2}\left(\mathcal{M},\mathbb{C} \right).
\]
We will assume this requirement going forward since it is necessary to make much in the way of concrete statements about the analytic properties of the Koopman operator.  In particular, using Equation \eqref{eqn:koop1}, if we further suppose that we have an observable $g$ such that $g\in C_{1}(\mathcal{M})$ then we see that 
\[
\lim_{t\rightarrow 0^{+}} \frac{\gnorm{\mathcal{K}^{t}g - g}_{\infty}}{t} = \gnorm{f\cdot \nabla g}_{\infty},
\]
where $\gnorm{\cdot}_{\infty}$ denotes the supremum norm over the compact subset $\mathcal{M}$.  From this computation, we find that the infinitesimal generator, say $\mathcal{L}$, affiliated with $\mathcal{K}^{t}$ is given by
\[
\mathcal{L}g = f({\bf x})\cdot \nabla g\,,
\]
and we likewise can use the Koopman operator to solve the initial-value problem 
\[
u_{t} = \mathcal{L}u, ~ u({\bf x},0) = g({\bf x}),
\]
so that $u({\bf x},t)=\mathcal{K}^{t}g({\bf x})$.  From here, the central question in Koopman analysis is to determine the spectrum and affiliated modes of $\mathcal{L}$ since these then completely determine the behavior of $\mathcal{K}^{t}$.  This can, of course, involve solving the classic eigenvalue problem 
\[
\mathcal{L}\phi = \lambda \phi.
\]
However, as the reader may have noticed, there has been no discussion of boundary conditions.  Therefore, while one can get many useful results focusing on this problem, one must also allow that continuous spectra are a natural feature, and eigenfunctions can have pathological behavior as well as being difficult to categorize completely and in detail; see \cite{bollt1} and \cite{mezic4}.  As this issue represents an evolving research area, we do not attempt to make any further claim and simply follow in the existing trend of the literature and focus on trying to numerically solve the classic eigenvalue problem.  

To this end, if we can find the Koopman eigenfunctions $\{\phi_l\}_{l=1}^{\infty}$ with affiliated eigenvalues $\{\lambda_l \}_{l=1}^{\infty}$, where
\begin{align*}
    \mathcal{K}^{t}\phi_{l} = e^{t\lambda_{l}}\phi_{l}, \quad l\in \{1,2,\dots \}\,,
\end{align*}
then for any observable $g$ one, in principle, has a modal decomposition such that 
\[
    g({\bf x}) = \sum_{l=1}^{\infty}c_{l}\phi_{l}({\bf x})\,,
\]
as well as an analytic representation of the associated dynamics
\begin{equation} \label{eqn:koop2}
    \mathcal{K}^{t}g({\bf x}) = \sum_{l=1}^{\infty}c_{l}e^{t\lambda_{l}}\phi_{l}({\bf x})\,.
\end{equation}
Note, that the analytic details we provide regarding the Koopman operator in this section are those we find most essential to understanding the present work and for providing a reasonably complete reading experience.  Essentially all of it exists, and more detail and depth can be found, in \cite{lasota, mezic5, gonzalez2021antikoopmanism}, among many other sources.  

Now, the challenge of determining the modes and eigenvalues of the infinite-dimensional operator, $\mathcal{K}^{t}$, remains. In general, this is impossible to obtain in an analytic way, however, the DMD and its extensions,  the EDMD and the KDMD \cite{schmid,mezic1, kutz, williams, williams2}, allow for the numerical determination of a finite number of the Koopman modes and eigenvalues.  In this work, we focus on the EDMD since it most readily aligns with the methodologies of ML.  

The EDMD begins with the data set $\dstream$ where
\[
{\bf y}_{j} = \varphi(t_{j};{\bf x}), ~t_{j} =(j-1)\delta t\,,
\]
where $\delta t$ is the time step at which data are sampled.  Note, per this definition, we have that ${\bf y}_{1}={\bf x}$.  Following \cite{williams, williams2}, given our time snapshots $\dstream$, we suppose that any observable $g({\bf x})$ of interest lives in a finite-dimensional subspace $\mathcal{F}_{D}\subset L_{2}\left(\mathcal{O}\right)$ described by a given basis of observables $\left\{\psi_{l}\right\}_{l=1}^{N_{o}}$ so that 
\begin{align}\label{eqn:g_psi}
g({\bf x}) = \sum_{l=1}^{N_{o}}a_{l}\psi_{l}\left({\bf x}\right)\,.
\end{align}
Given this ansatz, we then suppose that  
\begin{align}\label{eqn:koopman_phi}
\mathcal{K}^{\delta t}g({\bf x}) = &\sum_{l=1}^{N_{o}}a_{l}\psi_{l}\left(\varphi\left(\delta t, {\bf x}\right) \right) \\
 = & \sum_{l=1}^{N_{o}}\psi_{l}({\bf x})\left({\bf K}^{T}_{o}{\bf a} \right)_{l} + r({\bf x};{\bf K}_{o})\, \nonumber
\end{align}
where $r({\bf x};{\bf K}_{o})$ is the associated error which results from the introduction of the finite-dimensional approximation of the Koopman operator represented by the $N_{o}\times N_{o}$ matrix ${\bf K}_{o}$.  

While we ultimately find a matrix ${\bf K}_{o}$ to minimize the error $r({\bf x};{\bf K}_{o})$ relative to the choice of observables, for this approach to make any real computational sense, we tacitly make the following assumption when using the EDMD
\begin{ans}\label{ansz1}
We have chosen observables $\left\{\psi_{l}\right\}_{l=1}^{N_{o}}$ such that the space 
\[
\mathcal{F}_{D} = \mbox{Span}\left(\left\{\psi_{l}\right\}_{l=1}^{N_{o}} \right) 
\]
is invariant under the action of the Koopman operator $\mathcal{K}^{\delta t}$, i.e. 
\[
\mathcal{K}^{\delta t} \mathcal{F}_{D} \subset \mathcal{F}_{D}\,.
\]
Equivalently, we suppose that there exists a set of observables for which the affiliated error $r({\bf x};{\bf K}_{o})$ in Equation \eqref{eqn:g_psi} is identically zero.  
\end{ans}
\noindent We see that if this condition holds for $\mathcal{K}^{\delta t}$, then it also holds for $\mathcal{K}^{n\delta t}$, where $n$ is an integer such that $n\geq1$; therefore, this Ansatz is stable with respect to iteration of the discrete time Koopman operator.  If this condition does not hold, or at least hold up to some nominal degree of error, then one should not imagine that the  EDMD method is going to provide much insight into the behavior of the Koopman operator.  

One can then demonstrate, in line with the larger DMD literature, that finding the error minimizing matrix ${\bf K}_{o}$ is equivalent to solving the optimization problem 
\begin{equation}\label{eqn:dmd1}
    {\bf K}_{o} = \underset{{\bf K}}{\mathrm{argmin}}\gnorm{\bm{\Psi}_{+} - {\bf K}\bm{\Psi}_{-}}_{F}^{2}\,,
\end{equation}
where the $N_{o}\times N_{T}$ matrices $\bm{\Psi}_{\pm}$ are given by 
\begin{equation}\label{eqn:dmd_psi}
\bm{\Psi}_{-} = \left\{\bm{\Psi}_{1}~ \bm{\Psi}_{2}~ \cdots ~\bm{\Psi}_{N_{T}} \right\}, \quad \boldsymbol{\Psi}_{+} = \left\{\bm{\Psi}_{2} ~\bm{\Psi}_{3} ~\cdots ~\bm{\Psi}_{N_{T}+1} \right\}\,,
\end{equation}
where each column in the above matrices is a $N_{o}\times 1$ vector of observables of the form 
\[
\bm{\Psi}_{j} = \left(\psi_{1}({\bf y}_{j}) ~ \cdots ~ \psi_{N_{o}}({\bf y}_{j}) \right)^{T},
\]
where $\gnorm{\cdot}_{F}$ is the Frobenius norm.  In practice, we solve \eqref{eqn:dmd1} using the Singular Value Decomposition (SVD) of $\bm{\Psi}_{-}$ so that
\begin{align*}
    \bm{\Psi}_{-} = {\bf U} \bm{\Sigma} {\bf W}^\dagger\,.
\end{align*}
This then gives us 
\begin{equation*} \label{eqn:dmd2}
    {\bf K}_{o} = \bm{\Psi}_{+} {\bf W} \bm{\Sigma}^{-P} {\bf U}^{\dagger}\,, 
\end{equation*} 
where $-P$ denotes the Moore--Penrose pseudoinverse.  The corresponding error in the Frobenius norm $E_{r}(\bm{K}_{o})$ is given by 
\[
E_{r}({\bf K}_{o}) = \gnorm{\bm{\Psi}_{+}\left({\bf I} - {\bf W}{\bf W}^{\dagger} \right)}_{F}\,.
\]
We see that $E_{r}({\bf K}_{o})$ serves as a proxy for the error function $r({\bf x};{\bf K}_{o})$.

Following existing methodology \cite{williams},  if we wished to find Koopman eigenfunctions and eigenvalues, then after diagonalizing ${\bf K}_{o}$ so that 
\[
{\bf K}_{o} = {\bf V} {\bf T} {\bf V}^{-1}, ~{\bf T}_{ll} = \tilde{t}_{l}\,, 
\]
then one can show that the quantity $\lambda_{l} = \ln(\tilde{t}_{l})/\delta t$ should be an approximation to an actual Koopman eigenvalue and 
\begin{equation}\label{eqn:emodes}
\phi_{l}({\bf x}) = \sum_{m=1}^{N_{o}}\psi_{m}({\bf x}){\bf V}^{-1}_{lm}, ~ l=1,\cdots,N_{o}\,.
\end{equation}
From here, in the traditional EDMD algorithm, one approximates the dynamics via the reconstruction formula  
\[
{\bf y}(t;{\bf x})\approx \sum_{l=1}^{N_{o}}{\bf k}_{l}e^{t\lambda_{l}}\phi_{l}({\bf x})\,,
\]
where the {\it Koopman modes} ${\bf k}_{l}\in \mathbb{C}^{N_{s}}$ in principle solve the initial-value problem 
\[
{\bf x} = \sum_{l=1}^{N_{o}}{\bf k}_{l} \phi_{l}({\bf x})\,.  
\]
In the original coordinates of the chosen observables, using Equation \eqref{eqn:emodes} we obtain the equivalent formula 
\[
{\bf y}(t;{\bf x})\approx {\bf K}_{m}e^{t\bm{\Lambda}}{\bf V}^{-1}\bm{\Psi}({\bf x})\,,
\]
where ${\bf K}_{m}$ is the $N_{s}\times N_{o}$ matrix whose columns are the Koopman modes ${\bf k}_{l}$ and $\bm{\Lambda}$ is the $N_{o}\times N_{o}$ diagonal matrix with diagonal entries $\lambda_{l}$.  In practice, we find the Koopman modes via the fitting formula
\begin{equation}\label{eqn:edmdfit}
{\bf K}_{m} = \underset{{\bf H}}{\mathrm{argmin}} \sum_{j=1}^{N_{T}+1}\gnorm{{\bf y}_{j}-{\bf H}{\bf V}^{-1}\bm{\Psi}({\bf y}_{j})}_{2}\,,
\end{equation}
where ${\bf H}$ is any complex $N_{s}\times N_{o}$ matrix and the norm $\gnorm{\cdot}_{2}$ refers to the standard Euclidean norm.  Of course, the appropriateness of this fit is completely contingent on the degree to which Ansatz 1 holds.  We address this issue in Section \ref{sec:autoencoder}.  

Finally, we note that the standard DMD is given by letting $N_{o}=N_{s}$ and $\psi_{l}({\bf x})=x_{l}$.  In this case, we have that ${\bf K}_{m}={\bf V}$.  Due to its popularity and ease of implementation, we use the DMD for reference in Section \ref{sec:results}.

\section{\label{sec:autoencoder}The Deep Learning Dynamic Mode Decomposition}
The central dilemma when using the EDMD is finding suitable collections of observables relative to the data stream $\dstream$.  To accomplish this in an algorithmic way, we suppose that an optimal set of observables can be found using a deep neural network $\mathcal{E}$ where 
\[
\mathcal{E}: \mathbb{R}^{N_{s}} \rightarrow \mathbb{R}^{N_{o}}, ~ \mathcal{E}\left(\bm{x}\right) = \tilde{\bm{x}}\,,
\]
and where across dimensions we define 
\[
\tilde{x}_{l} = \mathcal{E}_{l}({\bf x}), ~ l=1,\cdots,N_{o}\,.
\]
We call this the {\it encoder} network and the transformed coordinates, $\tilde{{\bf x}}$, the {\it latent variables}. Given that the neural network representing $\mathcal{E}$ consists of almost everywhere smooth transformations, we note that by Sard's Theorem \cite{abraham}, we can generically assume that $\mathcal{E}$ has a Jacobian of full rank at almost all points in the domain of the encoder.  We generically assume that the latent dimension $N_{o}\geq N_{s}$, making $\mathcal{E}$ an immersion \cite{abraham} from the flow space into the space of observables.  In this case, the matrices $\bm{\Psi}_{\pm}$ from equation \eqref{eqn:dmd_psi} are now given by 
\[
\bm{\Psi}_{-} = \left\{\tilde{\bm{y}}_{1} ~ \cdots ~ \tilde{\bm{y}}_{N_{T}}\right\}, ~ \bm{\Psi}_{+} = \left\{\tilde{\bm{y}}_{2} ~ \cdots ~ \tilde{\bm{y}}_{N_{T}+1}\right\}, ~\tilde{{\bf y}}_{j} = \mathcal{E}({\bf y}_{j})\,,
\]
so that we compute ${\bf K}_{o}$ in the latent-space coordinates .  Beyond the lower bound given above, we do not enforce any constraint on the number of dimensions of the latent space. Instead, $N_{o}$ is treated as a hyperparameter. This is a key feature of our encoder networks as they are allowed to lift the data into higher-dimensional latent spaces, giving the EDMD a rich, and flexibly defined, space of observables with which to work.   

Corresponding to the immersion $\mathcal{E}$, we introduce the {\it decoder} network $\mathcal{D}$ which is meant to act as the inverse of $\mathcal{E}$, i.e. we seek a decoder mapping acting as a submersion
\[
\mathcal{D}:\mathbb{R}^{N_{o}} \rightarrow \mathbb{R}^{N_{s}}\,,
\] 
so that 
\begin{equation}\label{eqn:ED_def}
\mathcal{D}\circ\mathcal{E}\left({\bf x} \right)= {\bf x}\,.
\end{equation}
Upon finding such a mapping, we can show the following lemma.
\begin{lem}
If for immersion $\mathcal{E}: \mathbb{R}^{N_{s}} \rightarrow \mathbb{R}^{N_{o}}$ there exists corresponding submersion $\mathcal{D}:\mathbb{R}^{N_{o}} \rightarrow \mathbb{R}^{N_{s}}$ such that 
\[
\mathcal{D}\circ\mathcal{E}\left({\bf x} \right)= {\bf x}\,,
\]
then $\mathcal{E}$ is injective and therefore an embedding.  
\end{lem}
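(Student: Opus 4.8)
The plan is to exploit the fact that the composition identity $\mathcal{D}\circ\mathcal{E}({\bf x}) = {\bf x}$ exhibits $\mathcal{D}$ as a left inverse of $\mathcal{E}$, from which injectivity is immediate and the embedding property follows with only a little topological bookkeeping.

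First I would establish injectivity directly. Suppose ${\bf x}_{1}, {\bf x}_{2} \in \mathbb{R}^{N_{s}}$ satisfy $\mathcal{E}({\bf x}_{1}) = \mathcal{E}({\bf x}_{2})$. Applying $\mathcal{D}$ to both sides and invoking \eqref{eqn:ED_def} gives ${\bf x}_{1} = \mathcal{D}(\mathcal{E}({\bf x}_{1})) = \mathcal{D}(\mathcal{E}({\bf x}_{2})) = {\bf x}_{2}$, so $\mathcal{E}$ is injective. This step uses nothing beyond the left-inverse identity and requires no differential structure.

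The substantive content is upgrading an injective immersion to an embedding. Recall that an embedding is an immersion that is also a homeomorphism onto its image $\mathcal{E}(\mathbb{R}^{N_{s}})$ equipped with the subspace topology; since $\mathcal{E}$ is already assumed to be an immersion and has just been shown injective, the one remaining item is continuity of the inverse map $\mathcal{E}^{-1}: \mathcal{E}(\mathbb{R}^{N_{s}}) \to \mathbb{R}^{N_{s}}$. Here I would observe that on the image this inverse coincides exactly with the restriction $\mathcal{D}|_{\mathcal{E}(\mathbb{R}^{N_{s}})}$: for any point $\tilde{{\bf x}} = \mathcal{E}({\bf x})$ in the image, one has $\mathcal{D}(\tilde{{\bf x}}) = {\bf x} = \mathcal{E}^{-1}(\tilde{{\bf x}})$. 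Since $\mathcal{D}$ is continuous (indeed smooth almost everywhere as a neural network map), its restriction to the image is continuous, and therefore $\mathcal{E}^{-1}$ is continuous. Combining this with injectivity and the immersion hypothesis yields that $\mathcal{E}$ is an embedding.

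The main obstacle to flag is conceptual rather than computational: an injective immersion is \emph{not} automatically an embedding on a noncompact domain such as $\mathbb{R}^{N_{s}}$ — the standard counterexamples are the figure-eight immersion and the irrational-slope line on the torus, where the inverse fails to be continuous. What rescues us is precisely the existence of the globally continuous submersion $\mathcal{D}$, which furnishes a continuous extension of $\mathcal{E}^{-1}$ to all of $\mathbb{R}^{N_{o}}$ and thereby supplies the missing topological condition at no extra cost. Thus the crux is recognizing that the decoder is not merely a set-theoretic left inverse but a continuous one, and that this continuity is exactly what distinguishes an embedding from a general injective immersion.
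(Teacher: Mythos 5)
Your proof is correct, and its first half --- applying $\mathcal{D}$ to both sides of $\mathcal{E}({\bf x}_{1})=\mathcal{E}({\bf x}_{2})$ to obtain injectivity --- is exactly the paper's argument. But the paper stops there: its proof establishes injectivity and immediately concludes ``Thus $\mathcal{E}$ is an embedding,'' leaving the passage from injective immersion to embedding unjustified. You correctly flag that this passage is not automatic on a noncompact domain such as $\mathbb{R}^{N_{s}}$ (the figure-eight curve and the irrational-slope line on the torus are injective immersions that are not embeddings), and you supply the missing step: on the image $\mathcal{E}(\mathbb{R}^{N_{s}})$ the inverse map coincides with the restriction of $\mathcal{D}$, which is globally continuous (neural network maps built from dense layers and ReLU activations are continuous everywhere, smooth almost everywhere), so $\mathcal{E}^{-1}$ is continuous and $\mathcal{E}$ is a homeomorphism onto its image. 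That observation --- the decoder is a \emph{continuous} left inverse, not merely a set-theoretic one --- is the real content of the lemma, and it is precisely what the paper's two-line proof glosses over. So your argument buys rigor: rather than taking a different route, it closes a genuine gap in the published proof, while the paper's version buys only brevity.
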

\begin{proof}
Suppose 
\[
\mathcal{E}({\bf x}_{1}) = \mathcal{E}({\bf x}_{2})\,.
\]
Then we see by identity that 
\[
\mathcal{D}\circ\mathcal{E}({\bf x}_{1}) = \mathcal{D}\circ\mathcal{E}({\bf x}_{2})\,,
\]
and therefore ${\bf x}_{1}={\bf x}_{2}$.  Thus $\mathcal{E}$ is an embedding.  
\end{proof}

As shown in \cite{mezic5,bollt1}, flows which are diffeomorphic to one another share Koopman eigenvalues and have eigenfunctions that are identical up to diffeomorphism.  In our case, $\mathcal{E}$ and $\mathcal{D}$ are typically not invertible since the reverse composition, $\mathcal{E}\circ\mathcal{D}$, does not necessarily yield the identity.  However, we can define the affiliated flow $\tilde{\varphi}(t;\tilde{{\bf x}})$ such that 
\[
\tilde{\varphi}(t;\tilde{{\bf x}}) = \mathcal{E}\left(\varphi(t;{\bf x})\right)\,.
\]
Immediately, we find that there must of course be an affiliated Koopman operator, $\tilde{\mathcal{K}}^{t}$, corresponding to this encoded, or lifted, flow. This then allows us to show the following theorem.
\begin{thm}
With $\mathcal{E}$ and $\mathcal{D}$ as above, if $(\phi({\bf x}),e^{\lambda t})$ are a spectral pair for the Koopman operator $\mathcal{K}^{t}$, then $(\tilde{\phi}(\tilde{{\bf x}}),e^{\lambda t})$ are a spectral pair for the Koopman operator $\tilde{\mathcal{K}}^{t}$ where $\tilde{\phi}=\phi\circ\mathcal{D}$.
\end{thm}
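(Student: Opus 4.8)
The plan is to verify the eigenfunction relation $\tilde{\mathcal{K}}^{t}\tilde{\phi} = e^{\lambda t}\tilde{\phi}$ directly by unwinding the definitions of the lifted Koopman operator and the lifted flow, since the hypotheses should make this a short computation rather than one requiring any spectral-theoretic machinery. First I would recall that a spectral pair $(\phi,e^{\lambda t})$ for $\mathcal{K}^{t}$ means, by Equation \eqref{eqn:koop1}, that $\phi(\varphi(t;{\bf x})) = e^{\lambda t}\phi({\bf x})$ for every ${\bf x}\in\mathcal{M}$, and that the lifted operator acts by $\tilde{\mathcal{K}}^{t}\tilde{g}(\tilde{{\bf x}}) = \tilde{g}(\tilde{\varphi}(t;\tilde{{\bf x}}))$ on observables of the lifted flow.

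The heart of the argument is a single chain of equalities, evaluated at a lifted point $\tilde{{\bf x}} = \mathcal{E}({\bf x})$. Using $\tilde{\phi} = \phi\circ\mathcal{D}$ together with the definition $\tilde{\varphi}(t;\tilde{{\bf x}}) = \mathcal{E}(\varphi(t;{\bf x}))$, I would compute
\[
\tilde{\mathcal{K}}^{t}\tilde{\phi}(\tilde{{\bf x}}) = \phi\bigl(\mathcal{D}(\tilde{\varphi}(t;\tilde{{\bf x}}))\bigr) = \phi\bigl(\mathcal{D}\circ\mathcal{E}(\varphi(t;{\bf x}))\bigr) = \phi(\varphi(t;{\bf x})) = e^{\lambda t}\phi({\bf x}),
\]
where the third equality invokes the hypothesis $\mathcal{D}\circ\mathcal{E} = \text{id}$ and the last uses the eigenfunction property. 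Finally, since $\tilde{\phi}(\tilde{{\bf x}}) = \phi(\mathcal{D}\circ\mathcal{E}({\bf x})) = \phi({\bf x})$, the right-hand side is exactly $e^{\lambda t}\tilde{\phi}(\tilde{{\bf x}})$, which is the claim.

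The one point requiring care, and the closest thing to an obstacle, is the domain on which this holds. The lifted flow $\tilde{\varphi}$ is only defined on the image $\mathcal{E}(\mathcal{M})$, so $\tilde{\phi}$ should be understood as an eigenfunction of $\tilde{\mathcal{K}}^{t}$ over observables on this embedded submanifold rather than over all of $\mathbb{R}^{N_{o}}$; the embedding property from the preceding lemma guarantees that each such $\tilde{{\bf x}}$ has a well-defined preimage ${\bf x} = \mathcal{D}(\tilde{{\bf x}})$. I would also flag that applying $\mathcal{D}\circ\mathcal{E} = \text{id}$ to $\varphi(t;{\bf x})$ in the chain above tacitly uses the flow-invariance of $\mathcal{M}$ assumed earlier, so that $\varphi(t;{\bf x})$ remains in the domain where $\mathcal{E}$, and hence the left-inverse identity, applies. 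Because $\mathcal{E}\circ\mathcal{D}$ need not be the identity off the image, I would be careful to claim only the one-sided implication stated and not its converse.
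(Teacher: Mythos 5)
Your proof is correct and takes essentially the same route as the paper's: a single chain of equalities combining the definition of the lifted flow $\tilde{\varphi}(t;\tilde{{\bf x}}) = \mathcal{E}(\varphi(t;{\bf x}))$, the definition of $\tilde{\mathcal{K}}^{t}$, the left-inverse identity $\mathcal{D}\circ\mathcal{E}({\bf x}) = {\bf x}$, and the eigenfunction property of $\phi$. Your added remarks about restricting to the image $\mathcal{E}(\mathcal{M})$ and about flow-invariance of $\mathcal{M}$ are careful points the paper leaves implicit, but they do not alter the argument.
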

\begin{proof}
For the operator $\mathcal{K}^{t}$, if we suppose it has eigenfunction $\phi({\bf x})$ with corresponding eigenvalue $e^{\lambda t}$, then we have the identities
\[
\mathcal{K}^{t}\phi({\bf x}) = e^{\lambda t}\phi({\bf x}) = \phi(\varphi(t;{\bf x}))\,.
\]
We then have the affiliated identities 
\begin{align*}
\phi(\varphi(t;{\bf x})) &= \left(\phi \circ \mathcal{D}\right) \left(\tilde{\varphi}(t;\tilde{{\bf x}})\right)\\
 &= e^{\lambda t} \left(\phi \circ \mathcal{D}\right)\left( \tilde{{\bf x}} \right) \\
 &= \tilde{\mathcal{K}}^{t}\left(\phi \circ \mathcal{D}\right)\left( \tilde{{\bf x}} \right)\,,
\end{align*}
and so in particular we see that 
\[
\tilde{\mathcal{K}}^{t}\left(\phi \circ \mathcal{D}\right)\left( \tilde{{\bf x}} \right) = e^{\lambda t} \left(\phi \circ \mathcal{D}\right)\left( \tilde{{\bf x}} \right)\,.  
\]
\end{proof}

Thus we see that every Koopman mode and eigenvalue in the original flow space is lifted up by the embedding $\mathcal{E}$.  That said, it is of course possible then that new spectral information affiliated with $\tilde{\mathcal{K}}^{t}$ can appear, and if we perform the EDMD in the lifted variables, there is no immediate guarantee we are, in fact, computing spectral information affiliated with the primary Koopman operator $\mathcal{K}^{t}$.  That said, if in fact Ansatz 1 holds for the given choice of observables, which is to say we have made the right choice of embedding $\mathcal{E}$, then we can prove the following theorem.
\begin{thm}
Assuming Ansatz 1 holds relative to some choice of observables \\ $\left\{\psi_{l}({\bf x})\right\}_{l=1}^{N_{o}}$, suppose that the action of $\mathcal{K}^{\delta t}$ on each observable is given by the $N_{o}\times N_{o}$ connection matrix ${\bf C}(\delta t)$ where
\[
\mathcal{K}^{\delta t}\psi_{m}({\bf x}) = \sum_{l=1}^{N_{o}}\text{C}_{ml}(\delta t) \psi_{l}({\bf x})\,,
\]
with $\text{C}_{ml} $ denoting the entries of the connection matrix.  If ${\bf C}(\delta t)$ is diagonalizable, then the EDMD algorithm only computes spectra and eigenfunctions of $\mathcal{K}^{t}$, $t>0$.  
\end{thm}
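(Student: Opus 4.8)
The plan is to reduce everything to a single exact matrix identity: when Ansatz 1 holds and the action of $\mathcal{K}^{\delta t}$ on the observables is encoded by the connection matrix $\mathbf{C}(\delta t)$, the snapshot matrices $\bm{\Psi}_\pm$ must satisfy $\bm{\Psi}_+ = \mathbf{C}(\delta t)\bm{\Psi}_-$ exactly. From there the matrix returned by EDMD coincides (on the subspace excited by the data) with $\mathbf{C}(\delta t)$, and a direct substitution shows that its eigenpairs are genuine Koopman eigenpairs rather than spurious artifacts.

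First I would translate the hypotheses into a statement about the snapshots. Each column of $\bm{\Psi}_-$ is $\bm{\Psi}_j$ with entries $\psi_m(\mathbf{y}_j)$, while each column of $\bm{\Psi}_+$ is $\bm{\Psi}_{j+1}$ with entries $\psi_m(\mathbf{y}_{j+1}) = \psi_m(\varphi(\delta t;\mathbf{y}_j)) = \mathcal{K}^{\delta t}\psi_m(\mathbf{y}_j)$. The connection-matrix hypothesis then gives, entrywise,
\[
(\bm{\Psi}_+)_{mj} = \sum_{l=1}^{N_o}\text{C}_{ml}(\delta t)\,\psi_l(\mathbf{y}_j) = \left(\mathbf{C}(\delta t)\,\bm{\Psi}_-\right)_{mj},
\]
so that $\bm{\Psi}_+ = \mathbf{C}(\delta t)\bm{\Psi}_-$ holds with no residual. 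In particular $\mathbf{C}(\delta t)$ attains the value zero in the objective of \eqref{eqn:dmd1}; writing the SVD $\bm{\Psi}_- = \mathbf{U}\bm{\Sigma}\mathbf{W}^{\dagger}$ and using $\bm{\Psi}_-\mathbf{W}\mathbf{W}^{\dagger} = \bm{\Psi}_-$ shows the error proxy $E_r$ vanishes as well.

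Next I would connect the computed matrix $\mathbf{K}_o$ to $\mathbf{C}(\delta t)$. The minimum-norm least-squares solution is $\mathbf{K}_o = \bm{\Psi}_+\bm{\Psi}_-^{+}$ with $\bm{\Psi}_-^{+}$ the Moore--Penrose pseudoinverse built from the SVD; substituting the exact relation gives $\mathbf{K}_o = \mathbf{C}(\delta t)\,\bm{\Psi}_-\bm{\Psi}_-^{+} = \mathbf{C}(\delta t)\,\Pi$, where $\Pi = \bm{\Psi}_-\bm{\Psi}_-^{+}$ is the orthogonal projector onto the column space of $\bm{\Psi}_-$ in $\mathbb{C}^{N_o}$. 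Thus $\mathbf{K}_o$ acts exactly as $\mathbf{C}(\delta t)$ on the subspace spanned by the snapshots, and when the trajectory is rich enough that $\{\bm{\Psi}_j\}$ spans $\mathbb{C}^{N_o}$ one has $\Pi = \mathbf{I}$ and $\mathbf{K}_o = \mathbf{C}(\delta t)$ outright. This reconciliation is the main obstacle: one must argue that the nontrivial spectral content of $\mathbf{K}_o$ lives in the column space of $\bm{\Psi}_-$, where $\mathbf{K}_o$ and $\mathbf{C}(\delta t)$ agree, so that no eigenpair returned by the algorithm is an artifact of rank deficiency in the data (the null directions of $\Pi$ being the only place spurious modes could hide).

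Finally I would verify that the eigenpairs of $\mathbf{C}(\delta t)$ are genuine. Diagonalizing $\mathbf{C}(\delta t) = \mathbf{V}\mathbf{T}\mathbf{V}^{-1}$ with $\mathbf{T}_{ll} = \tilde{t}_l$, the rows of $\mathbf{V}^{-1}$ are left eigenvectors, $(\mathbf{V}^{-1})_{l\cdot}\mathbf{C}(\delta t) = \tilde{t}_l(\mathbf{V}^{-1})_{l\cdot}$. Defining $\phi_l$ as in \eqref{eqn:emodes} and applying $\mathcal{K}^{\delta t}$ termwise,
\[
\mathcal{K}^{\delta t}\phi_l = \sum_{m=1}^{N_o}(\mathbf{V}^{-1})_{lm}\,\mathcal{K}^{\delta t}\psi_m = \sum_{m,k}(\mathbf{V}^{-1})_{lm}\,\text{C}_{mk}(\delta t)\,\psi_k = \tilde{t}_l\sum_{k}(\mathbf{V}^{-1})_{lk}\,\psi_k = \tilde{t}_l\,\phi_l,
\]
so each $\phi_l$ is an exact eigenfunction of $\mathcal{K}^{\delta t}$ with eigenvalue $\tilde{t}_l = e^{\lambda_l\delta t}$, whence $\lambda_l = \ln(\tilde{t}_l)/\delta t$ is a true Koopman eigenvalue. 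Diagonalizability is precisely what furnishes a complete set of $N_o$ independent such eigenfunctions and rules out defective (Jordan) directions, so the computed spectrum is exactly that of the restriction of $\mathcal{K}^{\delta t}$ to $\mathcal{F}_D$ and contains no extraneous entries; the stability of Ansatz 1 under iteration then carries these eigenpairs to $\mathcal{K}^{n\delta t}$, giving the stated conclusion for $t>0$ in the sampled sense.
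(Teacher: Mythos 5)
Your proof is correct and, at its core, follows the same strategy as the paper's: identify the EDMD matrix ${\bf K}_{o}$ with the connection matrix ${\bf C}(\delta t)$, diagonalize, and check that the rows of ${\bf V}^{-1}$ assemble the observables into exact eigenfunctions of $\mathcal{K}^{\delta t}$ --- your left-eigenvector computation is precisely the paper's choice ${\bf a} = ({\bf V}^{-1})^{T}_{j}$. Where you genuinely differ is the identification step. The paper simply asserts that ``clearly ${\bf C}(\delta t)={\bf K}_{o}$,'' whereas you derive the exact snapshot identity $\bm{\Psi}_{+} = {\bf C}(\delta t)\bm{\Psi}_{-}$, pass through the pseudoinverse to get ${\bf K}_{o} = {\bf C}(\delta t)\,\Pi$ with $\Pi = \bm{\Psi}_{-}\bm{\Psi}_{-}^{+}$, and isolate exactly when the two matrices coincide, namely when the snapshot vectors span $\mathbb{C}^{N_{o}}$. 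This buys a sharper result: you make explicit the data-richness (full row rank) hypothesis that the paper leaves implicit, and you correctly flag that in the rank-deficient case the eigenpairs of ${\bf K}_{o}$ are not automatically eigenpairs of ${\bf C}(\delta t)$ --- a real caveat, since the column space of $\bm{\Psi}_{-}$ need not even be ${\bf C}$-invariant ($\bm{\Psi}_{N_{T}+1}$ can exit it), so that part of your sketch is honestly labeled an obstacle rather than fully discharged, which is still more than the paper does. One smaller difference cuts the other way: to extend from $\mathcal{K}^{\delta t}$ to $\mathcal{K}^{t}$ for all $t>0$, the paper passes to the infinitesimal generator $\mathcal{L}$, while you iterate Ansatz 1 and reach only the sample times $n\delta t$; your conclusion is therefore slightly weaker than the stated theorem (continuous $t>0$), though both treatments of this final step are equally terse.
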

\begin{proof}
For $g\in \mathcal{F}_{D}$, we have that 
\[
g({\bf x}) = \sum_{l=1}^{N_{o}}a_{l}\psi_{l}({\bf x})\,,
\]
If Ansatz 1 holds, it is then the case that 
\[
\mathcal{K}^{\delta t}\left(\sum_{l=1}^{N_{o}}a_{l}\psi_{l}({\bf x})\right) = \sum_{m=1}^{N_{o}}b_{m}\psi_{m}({\bf x}),
\]
and likewise we must have that 
\begin{align*}
\mathcal{K}^{\delta t}\left(\sum_{l=1}^{N_{o}}a_{l}\psi_{l}({\bf x})\right) = & \sum_{l=1}^{N_{o}}a_{l}\mathcal{K}^{\delta t}\psi_{l}({\bf x})\\
= & \sum_{l=1}^{N_{o}}\sum_{m=1}^{N_{o}}a_{l}C_{lm}(\delta t)  \psi_{m}({\bf x})\,.
\end{align*}
Thus, the action of the Koopman operator is now recast in terms of the following matrix problem 
\[
{\bf C}^{T}(\delta t) {\bf a} = {\bf b}\,.
\]
Likewise, if we ask for the corresponding connection matrix of higher powers of $\mathcal{K}^{\delta t}$ so that for integer $n\geq 1$ we have 
\[
\mathcal{K}^{n\delta t}\left(\sum_{l=1}^{N_{o}}a_{l}\psi_{l}({\bf x})\right) = \sum_{l=1}^{N_{o}}\sum_{m=1}^{N_{o}}a_{l}C_{lm}(n\delta t)  \psi_{m}({\bf x})\,.
\]
Then we see that 
\[
{\bf C}\left(n\delta t\right) = {\bf C}\left(\delta t\right){\bf C}\left((n-1)\delta t\right)\,,
\]
or, defining ${\bf C}(0)={\bf I}$ with ${\bf I}$ being the $N_{o}\times N_{o}$ identity matrix,
\[
{\bf C}\left(n\delta t\right) = {\bf C}^{n}\left(\delta t\right).
\]
Moreover, referring to the EDMD algorithm, clearly ${\bf C}(\delta t)={\bf K}_{o}$, so that if ${\bf K}_{o}={\bf V}{\bf T}{\bf V}^{-1}$, then 
\[
{\bf C}\left(n\delta t\right) = {\bf V}{\bf T}^{n}{\bf V}^{-1}\,.
\]

Choosing then the vector of coefficients ${\bf a}$ to be the $j^{\text{th}}$ column of $({\bf V}^{-1})^{T}$ or ${\bf a} = ({\bf V}^{-1})^{T}_{j}$, we see 
\[
\mathcal{K}^{\delta t}\left(\sum_{l=1}^{N_{o}}a_{l}\psi_{l}({\bf x})\right) = e^{\delta t \lambda_{j}}\left(\sum_{l=1}^{N_{o}}a_{l}\psi_{l}({\bf x})\right),
\]
so that we have using the EDMD algorithm we see we have computed $N_{o}$ eigenvalues and eigenvectors of $\mathcal{K}^{\delta t}$.  Passing to the infinitesimal generator $\mathcal{L}$ allows us to then extend the result for the Koopman operator $\mathcal{K}^{t}$ for $t>0$.    
\end{proof}

So we see on the one hand that the EDMD left to its own devices is prone to introducing perhaps spurious spectral information, and of course, without recourse to a known reference, we have no way in advance of knowing how to tell which results generated via the EDMD produce relevant spectra.  We note that this issue was numerically illustrated in \cite{kutz3}.  On the other hand, if we can somehow ensure Ansatz 1 holds, then the EDMD is guaranteed to produce meaningful results with essentially zero error.  Of course, what remains in either case is the fundamental dilemma of how to choose observables such that Ansatz 1 is enforced, or at least such that the error is guaranteed to be controlled in some uniform way.  

To address this dilemma then, we propose the deep learning dynamic mode decomposition (DLDMD), in which we determine the encoder/decoder pair $\mathcal{E}$ and $\mathcal{D}$ by minimizing the following loss function $\mathcal{L}$, where 
\begin{equation}\label{eqn:lossfun}
	\mathcal{L} = \alpha_1\mathcal{L}_{\text{recon}} + \alpha_2\mathcal{L}_{\text{dmd}} + \alpha_3\mathcal{L}_{\text{pred}} + \alpha_4||{\bf W}_{g}||^{2}_{2}\,,
\end{equation}
such that 
\begin{align*}\label{eqn:loss}
    \mathcal{L}_{\text{recon}} &= \frac{1}{N_{T}+1}\sum_{j=1}^{N_{T}+1} ||\textbf{y}_{j}-\mathcal{D}(\mathcal{E}(\textbf{y}_{j}))||_{2}\,, \\[2ex]
    \mathcal{L}_{\text{dmd}}  &= E_{r}\left({\bf K}_{o}\right)\,,\\[2ex]
    \mathcal{L}_{\text{pred}} &= \frac{1}{N_{T}}\sum_{j=1}^{N_{T}} ||\textbf{y}_{j+1} - \mathcal{D}({\bf V} {\bf T}^j {\bf V}^{-1} \mathcal{E}(\textbf{x}))||_{2}\,,
\end{align*}
and ${\bf W}_{g}$ contains the weights of the $\mathcal{E}$ and $\mathcal{D}$ networks making the final term in $\mathcal{L}$ a regularization term. The hyperparameters $\alpha_{1}, \alpha_{2}, \alpha_{3},$ and $\alpha_{4}$ provide appropriate scalings for each loss component. The autoencoder reconstruction loss,  $\mathcal{L}_{\text{recon}}$,  demands that $\mathcal{D}$ approximates the inverse of $\mathcal{E}$. We see then that $\mathcal{L}_{\text{recon}}$ in effect replaces Equation \eqref{eqn:edmdfit}.  The DMD loss, $\mathcal{L}_{\text{dmd}}$, keeps $r({\bf x};{\bf K}_{o})$ as small as possible relative to advancing one timestep.  In contrast, we see $\mathcal{L}_{\text{pred}}$ makes the overall method remain stable under repeated applications of the finite-dimensional approximation to $\mathcal{K}^{\delta t}$.  Thus $\mathcal{L}_{\text{dmd}}$ plays the role of ensuring we have a consistent time-stepping scheme, while $\mathcal{L}_{\text{pred}}$ ensures we have a globally stable method, and so by combining the two, we address the fundamental dilemma presented by Ansatz 1.  It is this ensured global stability that motivates us to call the diagonalization of the matrix ${\bf K}_{o} = {\bf V} {\bf T} {\bf V}^{-1}$ from our algorithm's EDMD the {\it global linearization} of the flow.  

We further see that this loss function implements the diagram in Figure \ref{fig:comm_diag}. Through this diagram we impose a one-to-one mapping between trajectories in the original coordinates, $\bm{y}$,  and trajectories in the latent space, $\tilde{\bm{y}}$.  Note that this diagram allows us to circumvent the unknown flow map, $\varphi(\delta t; \bm{y}_{j})$, which is equivalent to the Koopman operator of interest, $\mathcal{K}^{\delta t}$, from Equations \eqref{eqn:g_psi} and \eqref{eqn:koopman_phi}.
\begin{figure}
    \centering
    \includegraphics{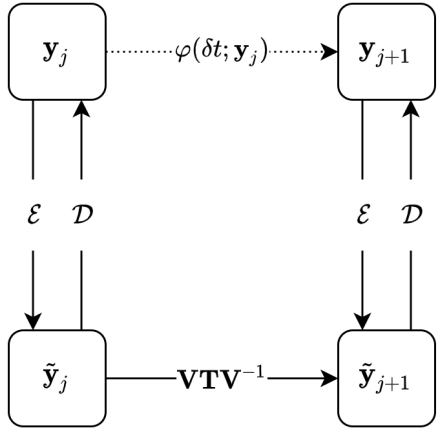}
    \caption{Diagram illustrating the relationships between the encoder ($\mathcal{E}$), decoder ($\mathcal{D}$), and EDMD/global linearization (${\bf VTV^{-1}}$) steps.  Assuming Ansatz \ref{ansz1} holds, these relations are exact. In practice, these are approximations and so we must view the mappings with solid lines as having some affiliated error in a process that allows us to circumvent not knowing the flow map $\varphi(\delta t;\bm{y}_{j})$.}
    \label{fig:comm_diag}
\end{figure}

\section{\label{sec:algorithm}The DLDMD Algorithm: Implementation Details}
We build the autoencoder in the Python programming language using Tensorflow version 2.  The deep neural networks we construct that act as $\mathcal{E}$ and $\mathcal{D}$ from Equation \eqref{eqn:ED_def} must transform each vector of coordinates along a sample trajectory to and from the latent-space coordinates , respectively.  We chose to use dense layers in each network,  though other layer types should suffice so long as they encode each point along the trajectory separately,  are densely connected, and output the correct dimensions.   

As is the case when training any type of neural network, there are a number of hyperparameters that the researcher must take care in selecting. However, we found that the encoder and decoder networks did not require significantly different hyperparameters from dataset to dataset. Notably, the architecture of the neural networks remained identical across all examples. We found that 3 hidden layers each with 128 neurons were sufficient for all of the test problems presented in Section \ref{sec:results}. The primary tunable parameter was the dimension of the latent space, $N_{o}$, which was tuned manually. We used Rectified Linear Units (ReLU) for the activation functions and chose the Adam optimizer with a learning rate of $10^{-3}$ for the harmonic oscillator, and $10^{-4}$ for Duffing, Van der Pol, and Lorenz 63. For the loss function hyperparameters in Equation \eqref{eqn:lossfun}, $\alpha_{1}, \alpha_{2}$,  and $\alpha_{3}$ were all set to $1$,  while $\alpha_{4}=10^{-9}$. The harmonic oscillator was trained with a batch size of 512 while all other systems had batch sizes of 256. All systems were trained for 1,000 epochs. The hardware used was an Nvidia Tesla V100.   

See Algorithm \ref{alg:dldmd} for the complete pseudocode for the DLDMD training method. The trained DLDMD model is applied by sending a trajectory through the encoder network, performing the EDMD using the encoded coordinates as observables, then using the modes, eigenvalues, and eigenfunctions to reconstruct the full length of the trajectory and beyond in the latent space. The decoder network then allows us to map the entire EDMD reconstruction back into the original coordinate system.  

\begin{algorithm}
\DontPrintSemicolon
\KwData{${\bf Y} \in \mathbb{R}^{n \times m}$ such that each column, $\bm{y}_i \in \mathbb{R}^{n}$, is an observation of the state variables $\delta t$ time from $\bm{y}_{i-1}$.}
\KwResult{$\mathcal{E}, \mathcal{D}, {\bf T}, {\bf V},{\bf k}$}
Initialize: set reconstruction weight $\alpha_{1}>0$, DMD prediction weight $\alpha_{2}>0$,  phase space prediction weight $\alpha_{3}>0$, and regularization weight $\alpha_{4}>0$.\;
\Begin{
	\For{$epoch = 1 \dots maxEpochs$}{
		$\bm{\Psi} \longleftarrow \mathcal{E}({\bf Y})$\;
		${\bf \bar{Y}} \longleftarrow \mathcal{D}(\bm{\Psi})$\;
		$\bm{\Psi}_{-} \longleftarrow \left[\bm{\psi}_{1}~\bm{\psi}_{2}~\cdots\bm{\psi}_{m-1} \right]$\;
		$\bm{\Psi}_{+} \longleftarrow \left[\bm{\psi}_{2}~\bm{\psi}_{3}~\cdots\bm{\psi}_{m} \right]$\;
		${\bf U}, \bm{\Sigma}, {\bf W}^{\dagger} \longleftarrow \text{SVD}(\bm{\Psi}_{-})$\;
		${\bf K}  \longleftarrow \bm{\Psi}_{+} {\bf W} \bm{\Sigma}^{-1} {\bf U}^{\dagger}$\;
		${\bf T},  {\bf V} \longleftarrow \text{EVD}({\bf K})$\;
		${\bf k} \longleftarrow \text{IVP}({\bf V},\bm{\Psi}_{-})$\;
		\For{i = 1 \dots m}{
			$\bm{\hat{\psi}}_{i} \longleftarrow {\bf V} \bm{\Sigma}^{i} {\bf k}$\;
		}
		$\bm{\hat{\Psi}} \longleftarrow \left[ \bm{\hat{\psi}}_{1} ~ \bm{\hat{\psi}}_{2} ~ \cdots \bm{\hat{\psi}}_{m} \right]$\;
		${\hat{\bf Y}} \longleftarrow \mathcal{D}(\boldsymbol{\hat{\Psi}})$\;
		$\mathcal{L} \longleftarrow \alpha_{1}||{\bar{\bf Y}} - {\bf Y}||_{\text{MSE}} + \alpha_{2} ||\bm{\Psi}_{+} ({\bf I} - {\bf W}{\bf W}^{\dagger})||_{F}$ $+ \alpha_{3} ||{\hat{\bf Y}} - {\bf Y}||_{\text{MSE}} + \alpha_{4}||{\bf W}_{g}||_{2}^{2}$\;
		$\mathcal{E}, \mathcal{D} \longleftarrow \text{OPT}(\mathcal{L})$\;
	}
}
Where SVD$(\cdot)$ is the Singular Value Decomposition, EVD$(\cdot)$ is the eigenvalue decomposition,  IVP$(\cdot,\cdot)$ solves an initial value problem, and OPT$(\cdot)$ is an appropriate optimizer for the neural networks $\mathcal{E}$ and $\mathcal{D}$. MSE indicates the mean squared error.
\caption{DLDMD\label{alg:dldmd}}
\end{algorithm}

\section{\label{sec:results}Results}
We test the DLDMD method on several datasets generated from dynamical systems that each exhibit some unique flow feature. In Sections \ref{sec:harmonic} - \ref{sec:vdp}, we examine much of the range of planar dynamics by way of studying the harmonic oscillator, Duffing, and Van der Pol equations.  For example, when limited to the first separatrix, the harmonic oscillator gives closed orbits about a single center. Proceeding in complexity, the Duffing equation is comprised of not only closed orbits but also a homoclinic connection that separates two distinct regions of the phase space, requiring now that the DLDMD compute trajectories on either side of a separatrix. The Van der Pol oscillator has trajectories both growing and decaying onto a limit cycle and exhibits multi-scale, slow/fast dynamics. Finally, in Section \ref{sec:lorenz}, we demonstrate the DLDMD on chaotic trajectories from the Lorenz-63 system, which extends our approach to three-dimensional results evolving over a strange attractor.

The training, validation, and test data for each example were generated using a fourth-order Runge-Kutta scheme. To generate test-data, for the harmonic oscillator and the Duffing system, we chose a time step of $\delta t=0.05$ and ran simulations out to $t_{f}=20$ in order to get closed orbits for all initial conditions. The Van der Pol system used $\delta_t=0.02$ and $t_f=15$. This system required a shorter integration step size to sufficiently sample the slow and fast parts of each trajectory. Finally, the Lorenz-63 system used $\delta_t=0.01$ and $t_{f}=3$. For testing, we applied the trained encoder and decoder with an EDMD decomposition on the latent trajectories and generated reconstructions up to the simulation times used for training. Then, in each case we ran the trajectories out further in time in their latent-space coordinates  using only the spectral information from their respective EDMD to evaluate stability. The prediction time for the harmonic oscillator and Duffing system was from $t_{f}=20$ to $t_{f}=40$, for the Van der Pol equation it was $t_{f}=15$ to $t_{f}=30$, and finally for the Lorenz-63 equations, it was from $t_{f}=3$ to $t_{f}=6$.  

We generate 15,000 trajectories for each system, using 10,000 for training, 3,000 for validation, and 2,000 for testing. Each trajectory is generated by uniform random sampling of initial conditions from some pre-defined region of phase space. For the harmonic oscillator, we used $x_{1} \in \{-3.1, 3.1\}$, $x_{2}\in\{-2,2\}$, and we limited our choice of trajectories to those within the first separatrix using the potential function $0.5x_{2}^{2} - \cos{x_{1}} < 0.99$. The Duffing system was generated over initial conditions sampled from $x_{1} \in \{-1,1\}$ and $x_{2} \in \{-1,1\}$. The Van der Pol system used $x_{1} \in \{-2, 2\}$ and $x_{2} \in \{-2, 2\}$ to generate trajectories and was then scaled by its standard deviation. The Lorenz-63 system used $x \in \{-15, 15\}$, $y \in \{-20, 20\}$, and $z \in \{0, 40\}$.

\subsection{The Harmonic Oscillator: One Center}\label{sec:harmonic}
The first system we consider is a nonlinear oscillator described by the undamped pendulum system,
\begin{align*}
	\dot{x}_{1} &= x_{2}, \\[1ex]
	\dot{x}_{2} &= -\mathrm{sin}(x_{1}).
\end{align*}
This system exhibits nearly linear dynamics near the origin and becomes increasingly nonlinear towards the separatrix. We limited the dataset to just those trajectories that lie below the separatrix in order to test the DLDMD on a system with only closed Hamiltonian orbits about a single center. 

\begin{figure}
    \hspace*{-1.5cm}
    \centering
    \includegraphics[width=0.65\linewidth]{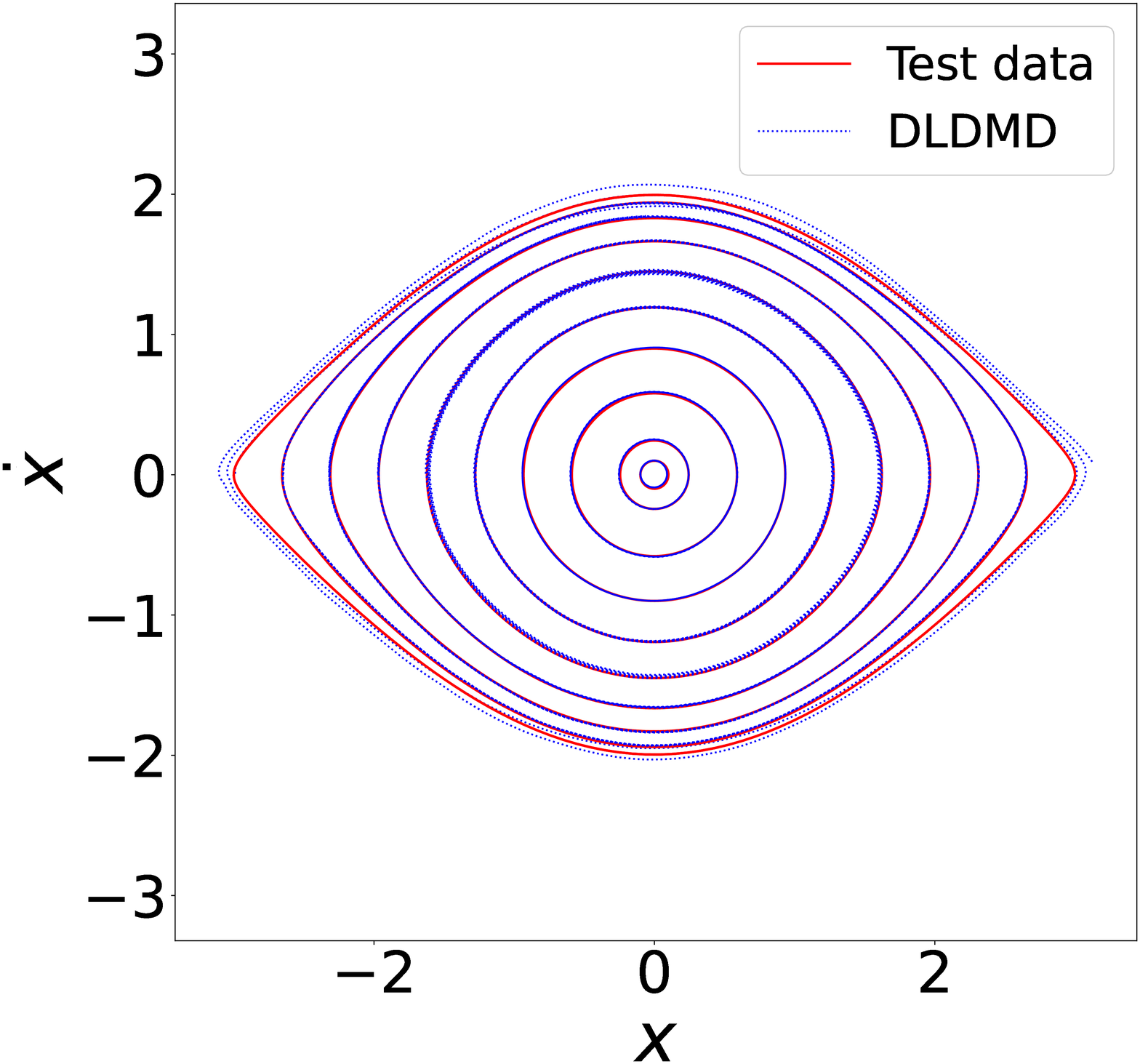}
    \caption{Results from the DLDMD as applied to a harmonic oscillator with test trajectories (solid lines) and predicted trajectories from the DLDMD (dotted lines). The MSE averaged over the 2000 test trajectories is $10^{-3.7}$.}
    \label{fig:pen_results}
\end{figure}

\begin{figure}
    \hspace*{-1.5cm}
    \centering
    \includegraphics[width=0.65\linewidth]{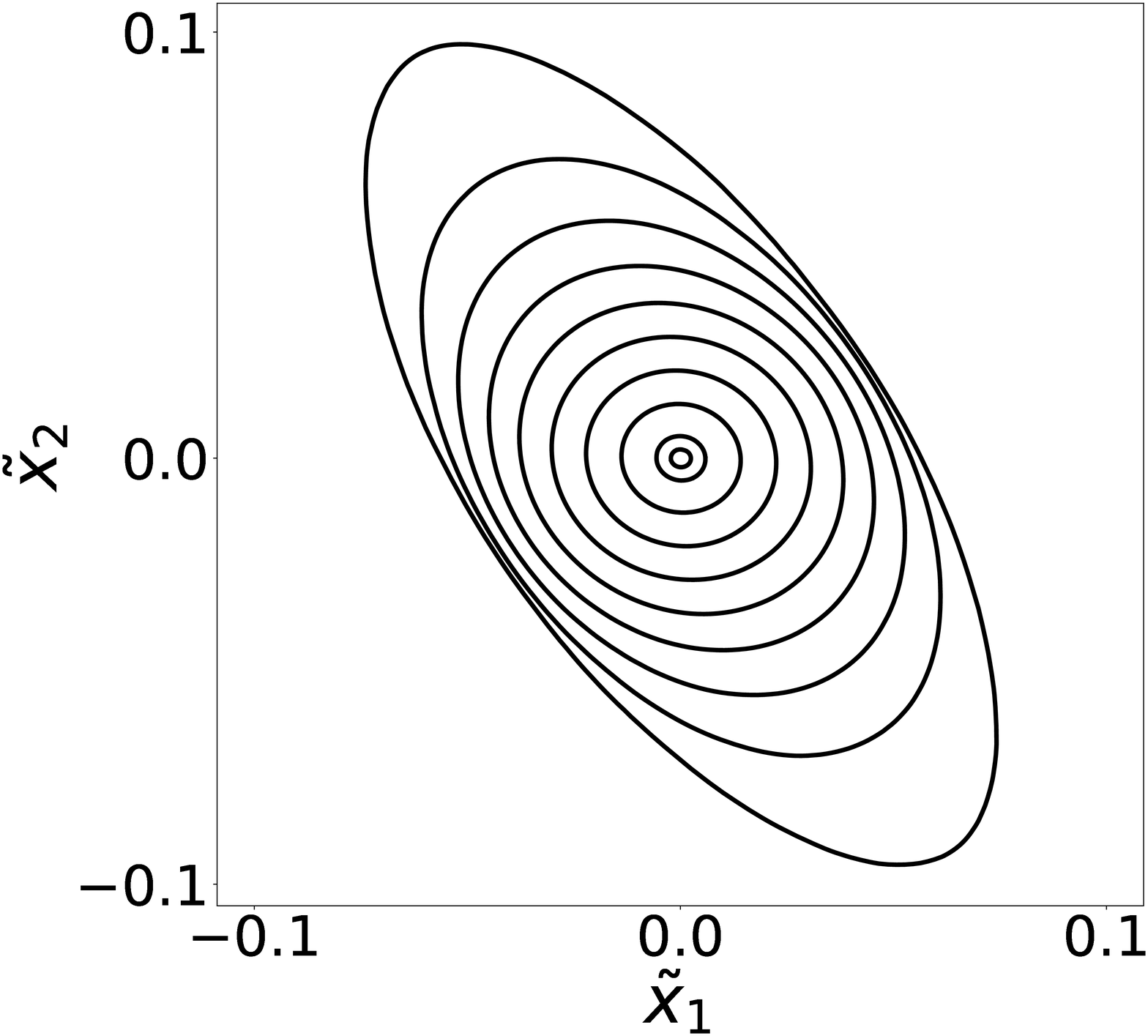}
    \caption{Test trajectories from the harmonic oscillator in the latent space coordinates. These are the trajectories on which the EDMD is applied.}
    \label{fig:pen_latent}
\end{figure}

Figure \ref{fig:pen_results} shows the DLDMD has found a mapping to and from coordinates in which it can apply the EDMD with fair precision and stability. This is achieved outside the linear regime corresponding to small angle displacements of the pendulum. These outer trajectories exhibit increasing nonlinearities; yet, nevertheless, the DLDMD is able to adapt to them with minimal assumptions in the model. For this example, we found that a latent dimension of $N_{o}=N_{s}=2$ produced the most parsimonious results. Taking the mean-squared error (MSE) for each trajectory and then averaging across all 2000 in the test set, we obtain a DLDMD loss of $10^{-3.7}$. 

Figure \ref{fig:pen_latent} plots the latent-space coordinates used in the EDMD step of the DLDMD. Here we see how the method has used the encoder network to morph the original test trajectories into a system that has less nonlinearity for the EDMD to overcome, in particular for the orbits near the separatrix. Indeed, by examining in Figure \ref{fig:pen_fft} the Fourier spectrum of each of the encoded coordinates, $(\tilde{x}_{1}, \tilde{x}_{2})$, we arrive at a fundamental innovation of the DLDMD method.  As we see, the embedded trajectory has a nearly monochromatic Fourier transform, showing that our neural network has learned embeddings and corresponding submersions which nonlinearly map the dynamics onto what would often be described in the dynamical systems literature as fundamental modes. Note, the trajectory illustrated in this figure corresponds to that in Figure \ref{fig:pen_results} which is the one that is closest to the separatrix and exhibits the most nonlinearity in the test set. We emphasize here that these latent-space coordinates are learned with no parameterization or assumptions from the user other than those in the loss function; see Equation \eqref{eqn:lossfun}. Moreover, we see in Figure \ref{fig:pen_eigs} plots of the eigenvalues for the 10 test trajectories from Figure \ref{fig:pen_results}. Each eigenvalue and its conjugate pair is essentially on the unit circle, so that this plot shows us how each embedded trajectory is governed by a single frequency of oscillation for all time. This in part echoes the relatively monochromatic Fourier spectra seen in Figure \ref{fig:pen_fft}.

\begin{figure}
    \hspace*{-1cm}
	\centering
	\begin{tabular}{c}
		\includegraphics[width=0.825\linewidth]{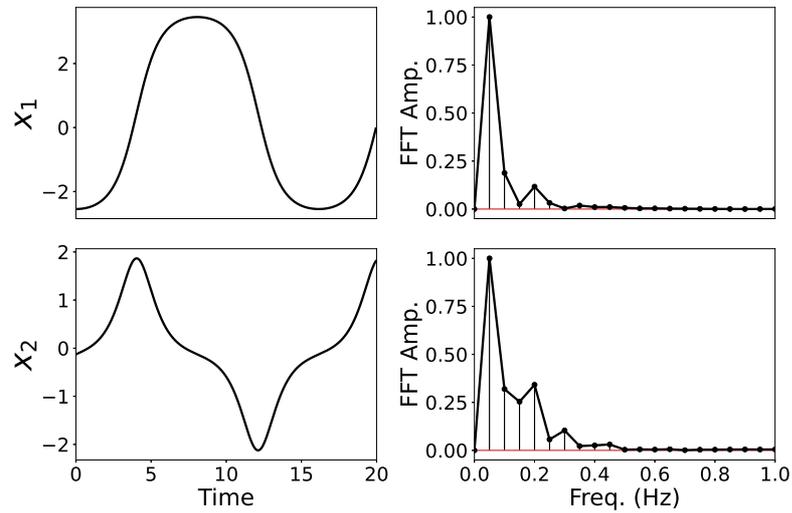}\\ (a) \\
		\includegraphics[width=0.825\linewidth]{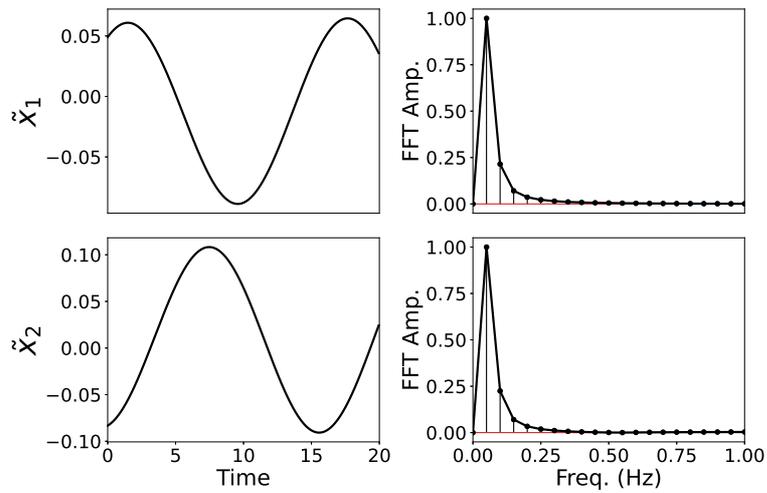}\\ (b)
	\end{tabular}
	\caption{Phase-space coordinates (a) and latent-space coordinates  (b) along with their affiliated normalized FFTs for the harmonic oscillator system. The test trajectory depicted in panel (a) corresponds to the outermost trajectory in Figure \ref{fig:pen_results} which lies nearest to the separatrix.}
	\label{fig:pen_fft}
\end{figure}

\begin{figure}
    \hspace*{-1.5cm}
	\centering
	\includegraphics[width=0.65\linewidth]{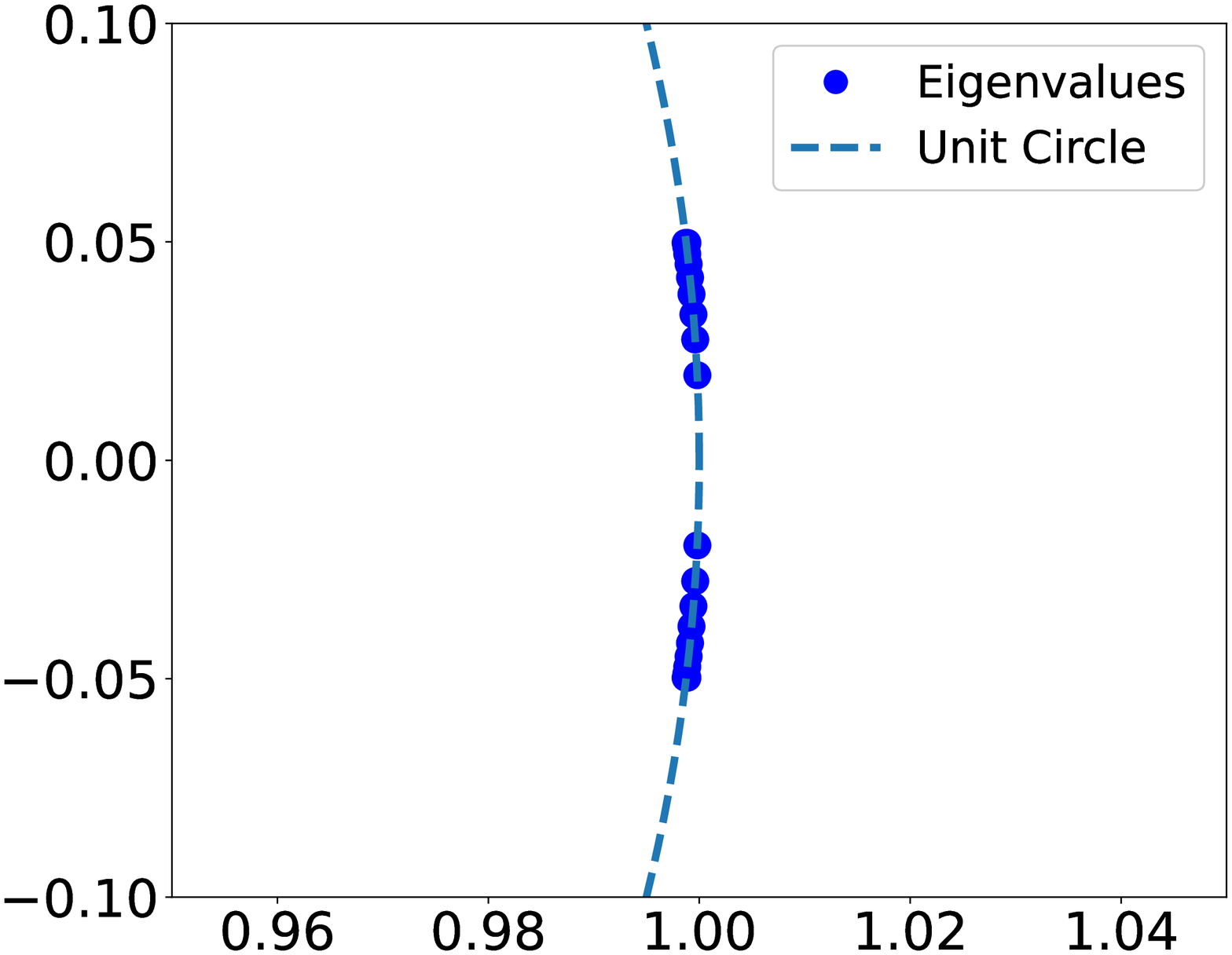}
	\caption{DLDMD eigenvalues for the 10 test trajectories in Figure \ref{fig:pen_results} from the harmonic oscillator.}
	\label{fig:pen_eigs}
\end{figure}

Of course, the harmonic oscillator only consists of closed orbits around a single center.  Therefore, it only contains one family of trajectories. As is shown in the next sections, when the dynamical system increases in complexity to include saddles, limit cycles, or chaos, we are still able to successfully generate global linearizations of the flow by increasing the embedding dimension $N_{o}$.

\subsection{The Duffing Equation: Two Centers}\label{sec:duffing}
The Duffing system is another weakly nonlinear oscillator with more complex behavior than the undamped pendulum. Without a driving force, the Duffing system is described by the double-well potential Hamiltonian system,
\begin{align*}
	\dot{x}_{1} &= x_{2}, \\[1ex]
	\dot{x}_{2} &= x_{1} - x_{1}^3.
\end{align*}
Here we are testing whether the DLDMD can cope with closed orbits that are not all oriented about a single center. Figure \ref{fig:duff_results} shows the reconstruction capability of the DLDMD over the unforced Duffing oscillator. For this system, we found that a latent dimension of $N_{o} = 3$ produced the best results.  Note, more on choosing the appropriate latent dimension is discussed in Section \ref{sec:conclusionsfuture}. Because $N_{o}=3$, we are still able to easily visualize the embedding, see Figure \ref{fig:duff_latent}, and find that the trajectories now follow nearly circular orbits on this higher-dimensional manifold. As we see, it appears that the homoclinic connections require an additional latent dimension in order to separate the three types of motion in the phase-space coordinates.  These three types are paths orbiting the left center, paths orbiting the right center, and paths orbiting outside the separatrix.

\begin{figure}
    \hspace*{-1.5cm}
	\centering
	\includegraphics[width=0.65\linewidth]{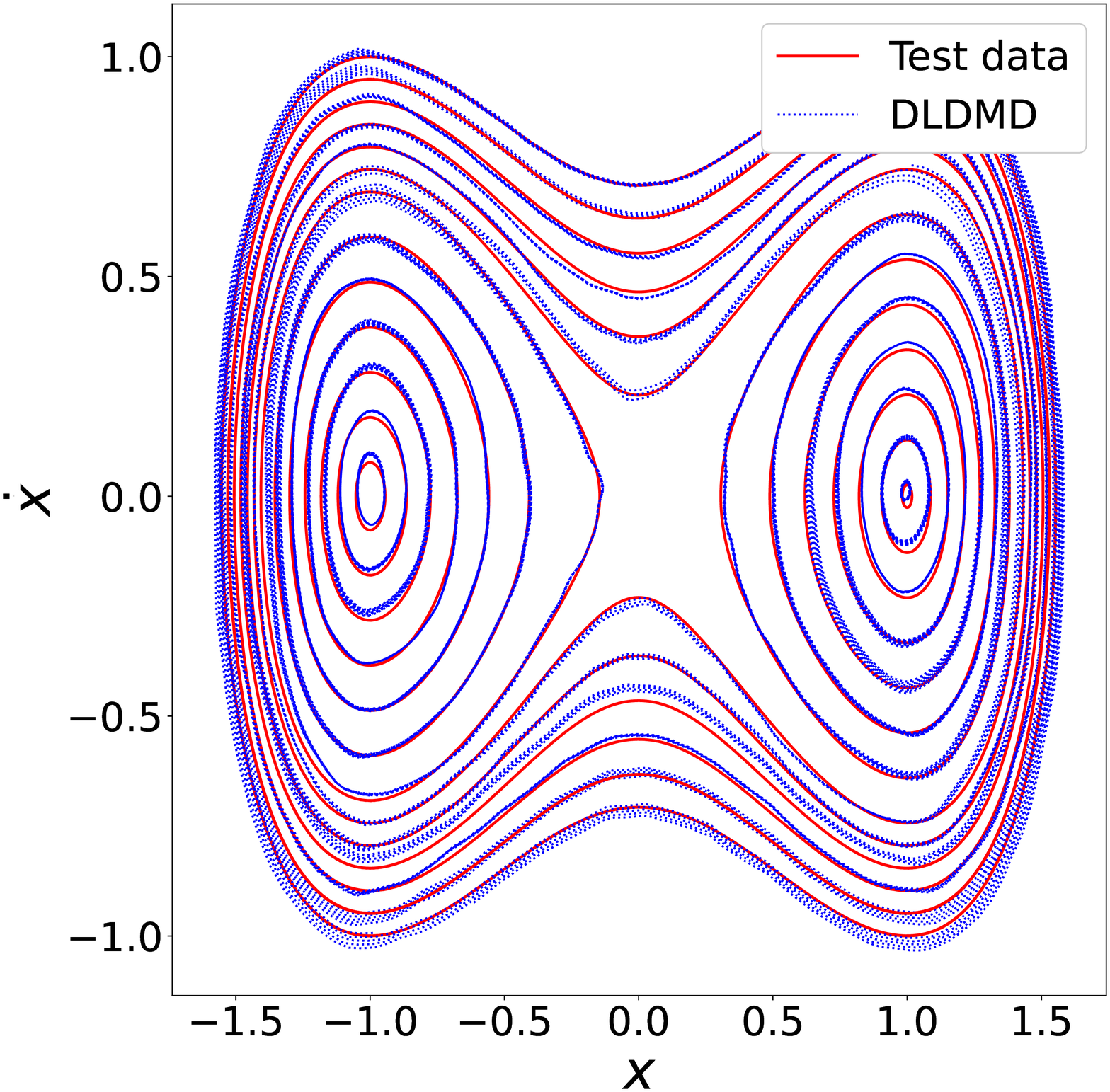}
	\caption{Results from the DLDMD as applied to the Duffing system with test trajectories (solid lines) versus predicted values from the DLDMD (dotted lines) in phase-space. The average MSE loss is $10^{-3.4}$.}
\label{fig:duff_results}
\end{figure}

\begin{figure}
    \hspace*{-1.5cm}
	\centering
	\includegraphics[width=0.65\linewidth]{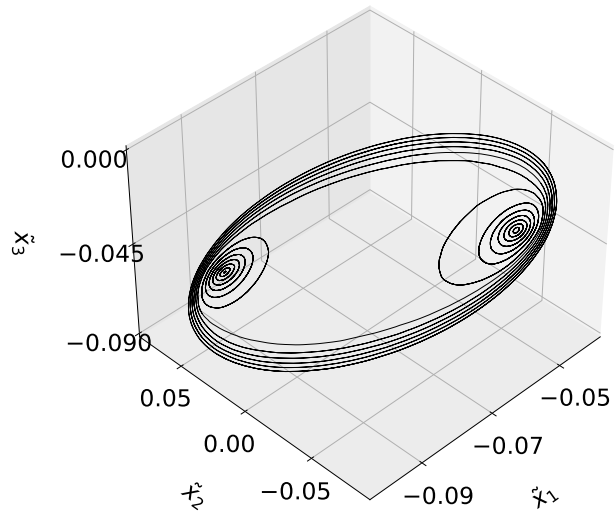}
	\caption{Test trajectories for the Duffing system in the latent-space coordinates.  These are the trajectories on which the EDMD is applied.}
\label{fig:duff_latent}
\end{figure}

As seen in Figure \ref{fig:duff_fft}, by simply adding one additional embedding dimension, we again find that the latent-space coordinates have nearly monochromatic Fourier spectra.  Furthermore, when we examine in Figure \ref{fig:duff_eigs} the affiliated eigenvalues for several of the test trajectories, we see that while each orbit is characterized by a unit-magnitude complex-conjugate pair of eigenvalues, as well as an eigenvalue exactly equal to one.  This strictly real eigenvalue corresponds to the EDMD accurately computing the temporal average of the time series, which for the Duffing system is determined by which fixed point an orbit oscillates around.  Thus, the higher embedding dimension here allows the EDMD to accurately account for this difference between trajectories.  

\begin{figure}
	\centering
	\begin{tabular}{c}
        \hspace*{-1cm}
		\includegraphics[width=0.825\linewidth]{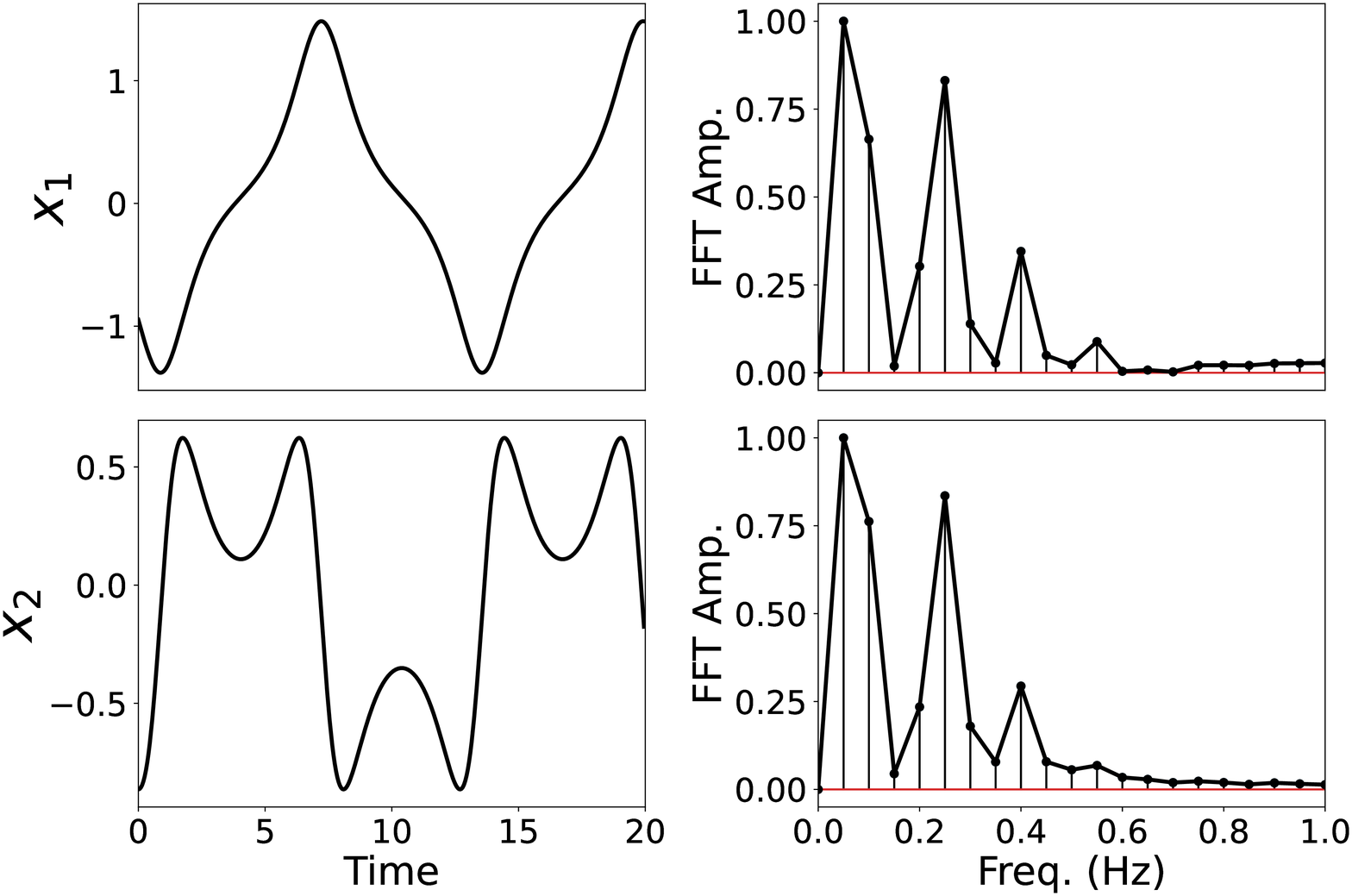}\\ (a) \\
        \hspace*{-1cm}
		\includegraphics[width=0.825\linewidth]{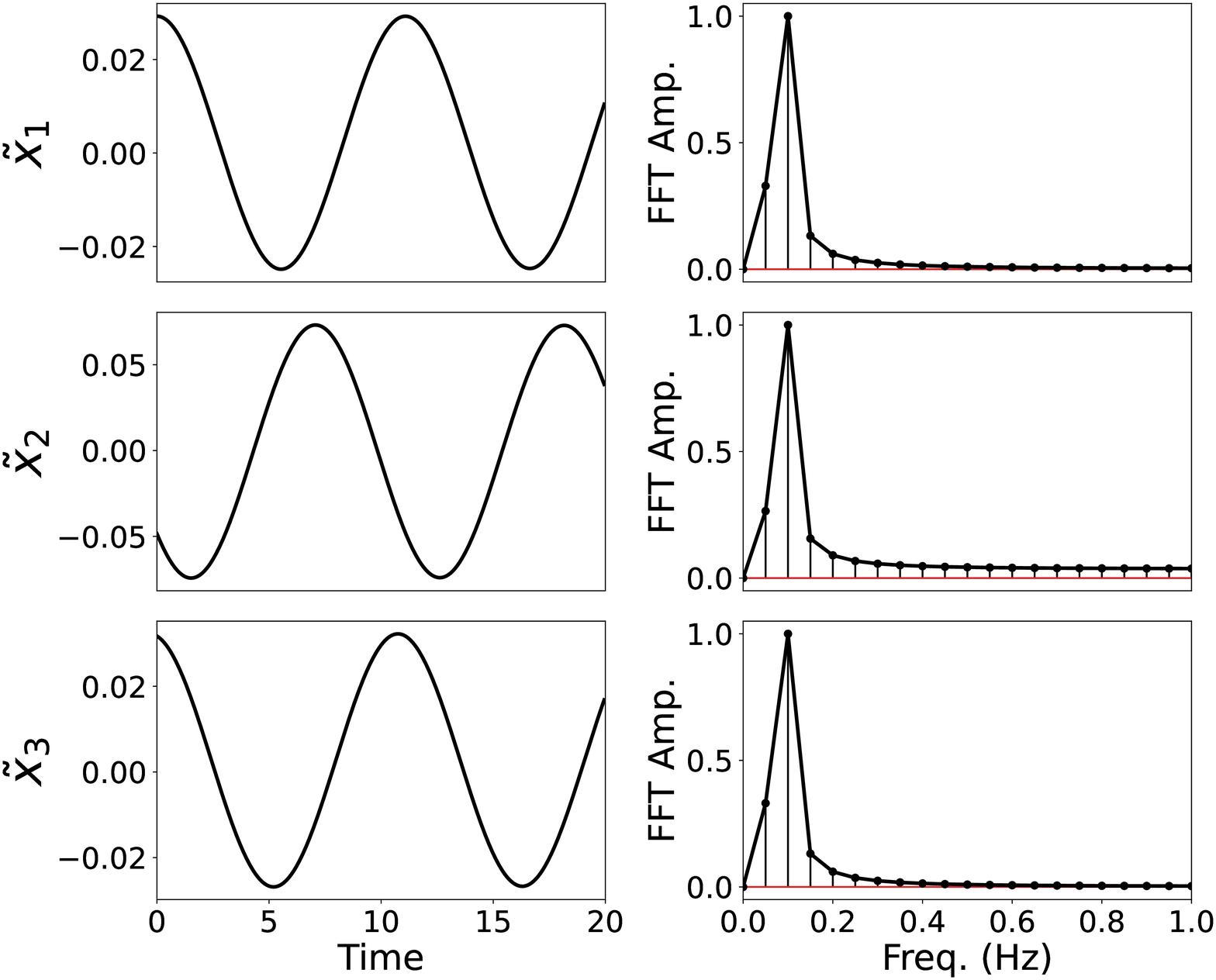}\\ (b)
	\end{tabular}
	\caption{Phase-space coordinates (a) and latent-space coordinates  (b) along with their affiliated normalized FFTs for the Duffing system. The test trajectory depicted in panel (a) corresponds to the innermost trajectory in Figure \ref{fig:duff_results} that encompasses both centers.}
	\label{fig:duff_fft}
\end{figure}

\begin{figure}
    \hspace*{-1cm}
	\centering
	\includegraphics[width=0.65\linewidth]{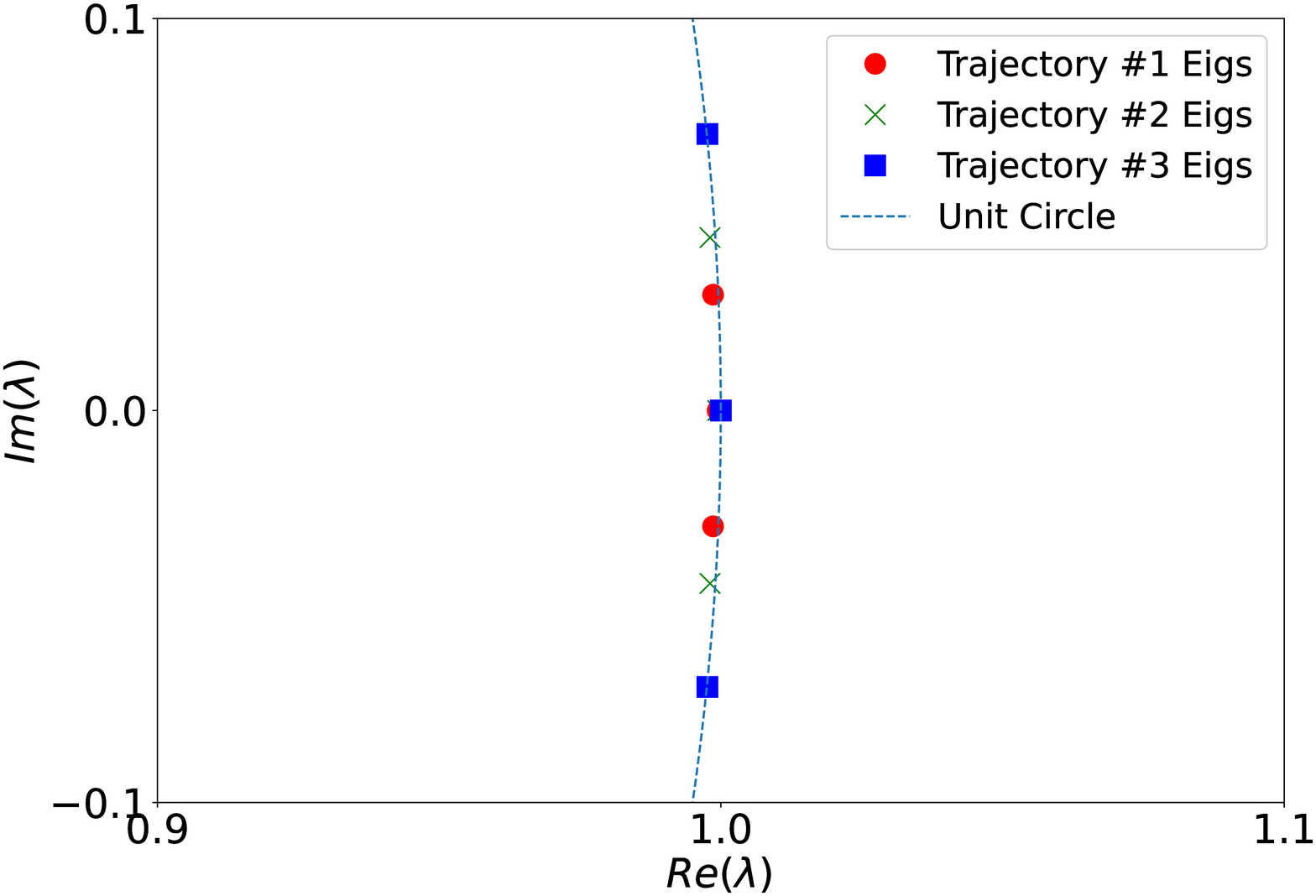}
	\caption{DLDMD eigenvalues for 3 trajectories for the Duffing system corresponding to an outer orbit (red dot), a left-center orbit (green x), and a right-center orbit (blue square). Note that all three types of orbits have an eigenvalue $\lambda=1$ corresponding to the average of each orbit.}
	\label{fig:duff_eigs}
\end{figure}

\subsection{The Van der Pol Oscillator: Attraction to a Slow/Fast Limit Cycle}\label{sec:vdp}
The Van der Pol oscillator, described by the parameterized dynamical system
\begin{align*}
	\dot{x}_{1} &= x_{2}, \\[1ex]
	\dot{x}_{2} &= \mu (1 - x_{1}^2) x_{2} - x_{1},
\end{align*}
has for positive values of $\mu$ a globally attractive limit cycle. All presented results use $\mu=1.5$; however, the DLDMD has been tested out to $\mu=4$ with no modifications to the algorithm or hyperparameters. The limit cycle itself is made up of slow and fast submanifolds thereby producing multiscale behavior. This system pushes the DLDMD much further than the harmonic oscillator and Duffing systems, for it must now account for the attraction onto the limit cycle as well as the multiscale periodic motion. Furthermore, the attraction onto the limit cycle involves a transient growth or decay term for initial conditions starting inside or outside the limit cycle respectively. Because of this complexity, we found that the DLDMD performed well with $N_{o}=8$, though for this system the choice was not as stringent as for the previous two cases. Our choice of $N_{o}=8$ was due to its performance after 1,000 epochs, which was the maximum used for all of the dynamical systems studied in this paper.  However, a reasonable reconstruction and prediction error could likely be obtained with slightly fewer or slightly more embedding dimensions given enough training epochs. 

\begin{figure}
    \hspace*{-1.5cm}
	\centering
	\includegraphics[width=0.65\linewidth]{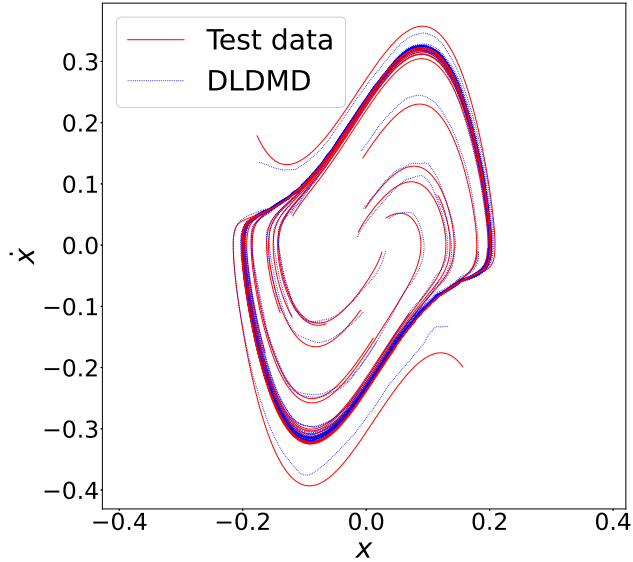}
	\caption{Results from the DLDMD as applied to the Van der Pol oscillator with $\mu=1.5$.  Test trajectories (solid lines) and the predicted values from the DLDMD (dotted lines) show paths in phase space. The average MSE loss is $10^{-2.8}$.}
\label{fig:vdp_results}
\end{figure}

Figure \ref{fig:vdp_results} shows the DLDMD reconstruction of the Van der Pol system for several test trajectories. The MSE loss averaged over all 2000 test trajectories is $10^{-2.8}$. Figure \ref{fig:vdp_fft} again illustrates how an encoded trajectory is transformed from one with a relatively large spread in its Fourier spectrum to a set of coordinates whose spread in Fourier space is much reduced, and Figure \ref{fig:vdp_eigs} plots the corresponding eigenvalues.  However, unlike the previous two cases, we do see some slight deviations away from strictly periodic motion, reflecting the transients in the underlying dynamics.  These transient phenomena are also reflected by five of the eigenvalues being off the unit circle, indicating that dynamics along the affiliated coordinates decay in time leaving only the oscillatory modes.    

\begin{figure}
	\centering
	\begin{tabular}{c}
        \hspace*{-1cm}
		\includegraphics[width=0.65\linewidth]{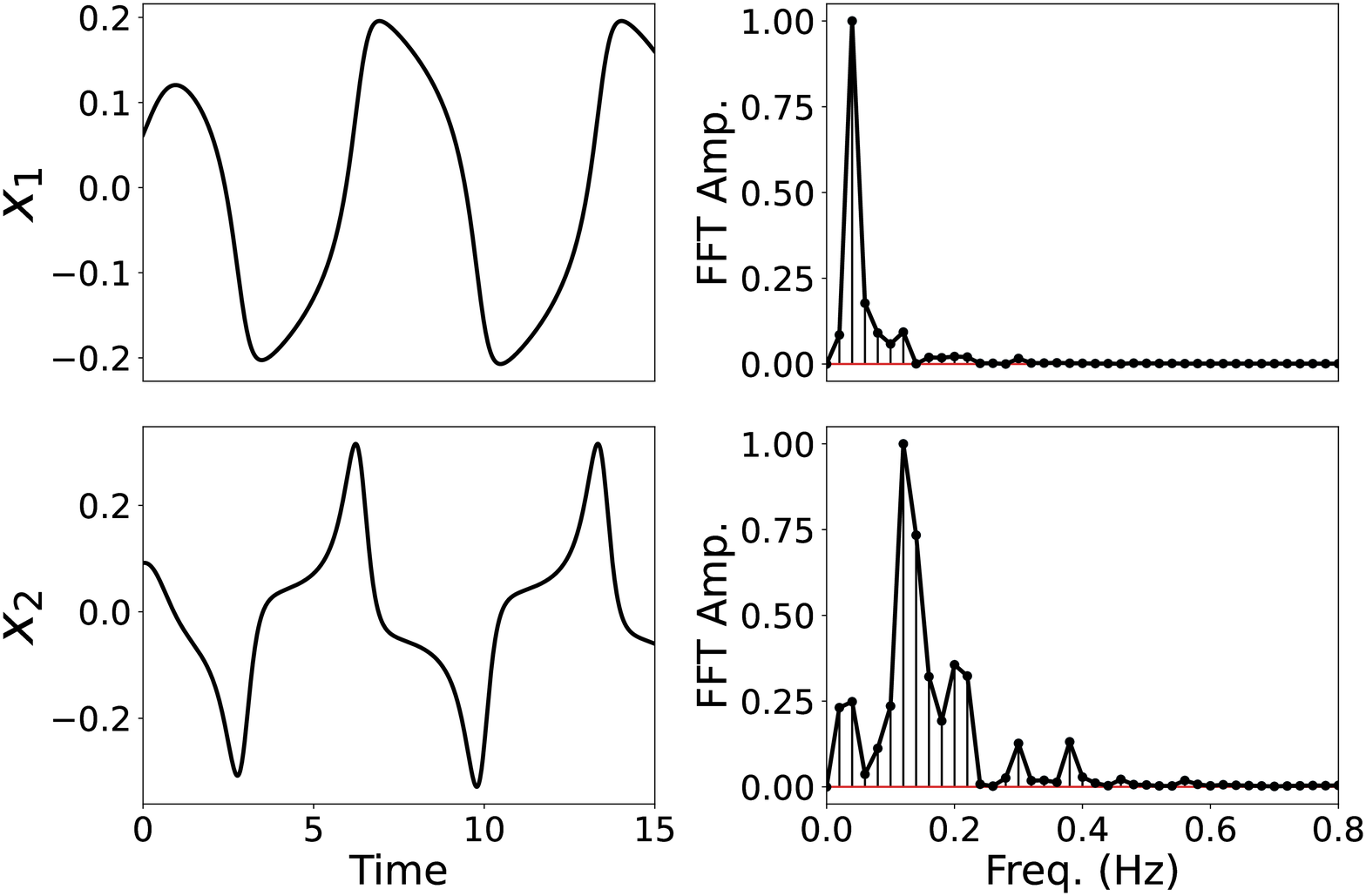}\\ (a) \\
        \hspace*{-1cm}
		\includegraphics[width=0.75\linewidth]{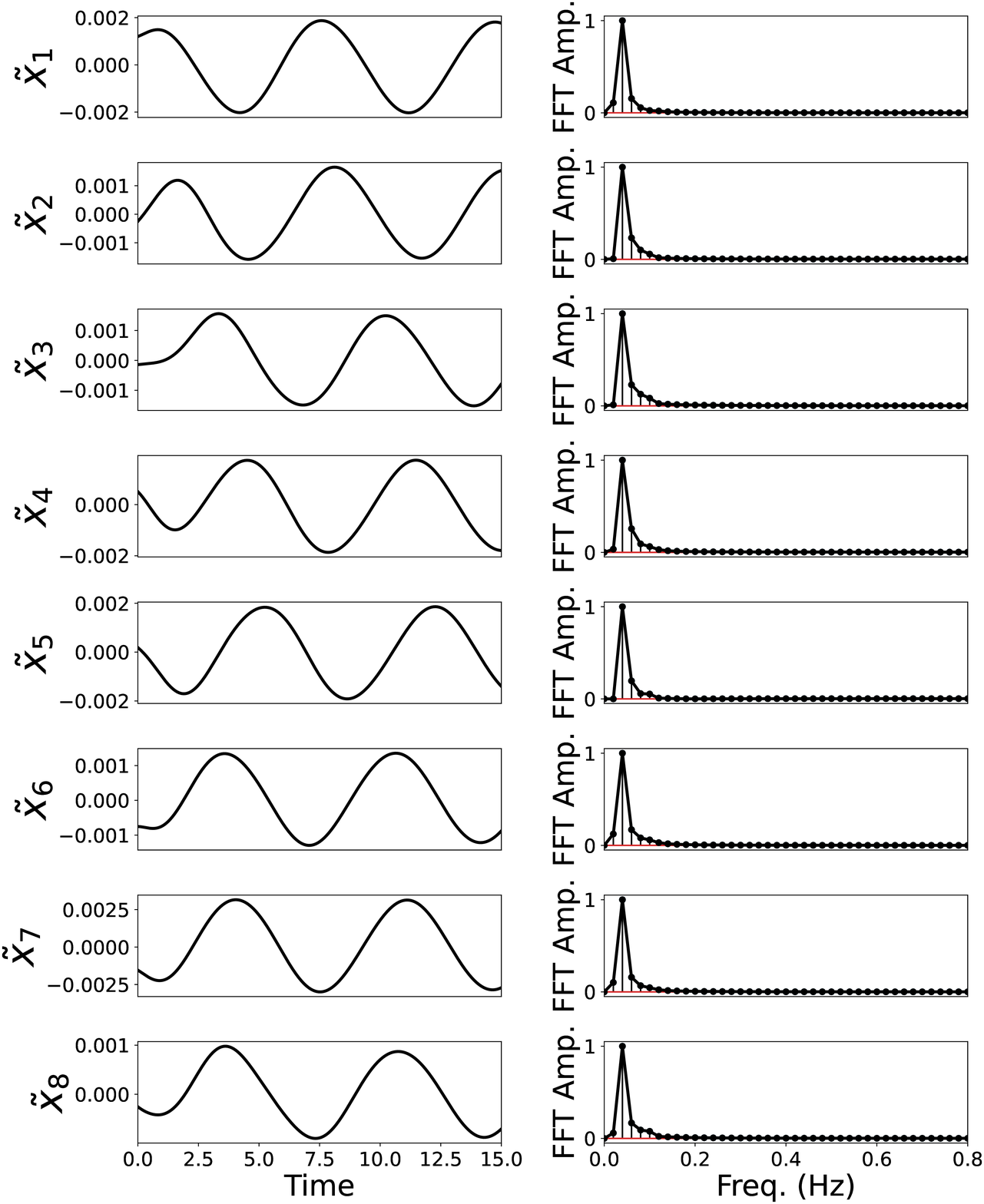}\\ (b)
	\end{tabular}
	\caption{Phase-space coordinates (a) and latent-space coordinates  (b) along with their affiliated normalized FFTs for the Van der Pol system. The test trajectory depicted in panel (a) corresponds to one of the trajectories in Figure \ref{fig:vdp_results} that begins near the origin and grows outward onto the limit cycle.}
	\label{fig:vdp_fft}
\end{figure}

\begin{figure}
    \hspace*{-1.5cm}
	\centering
	\includegraphics[width=0.55\linewidth]{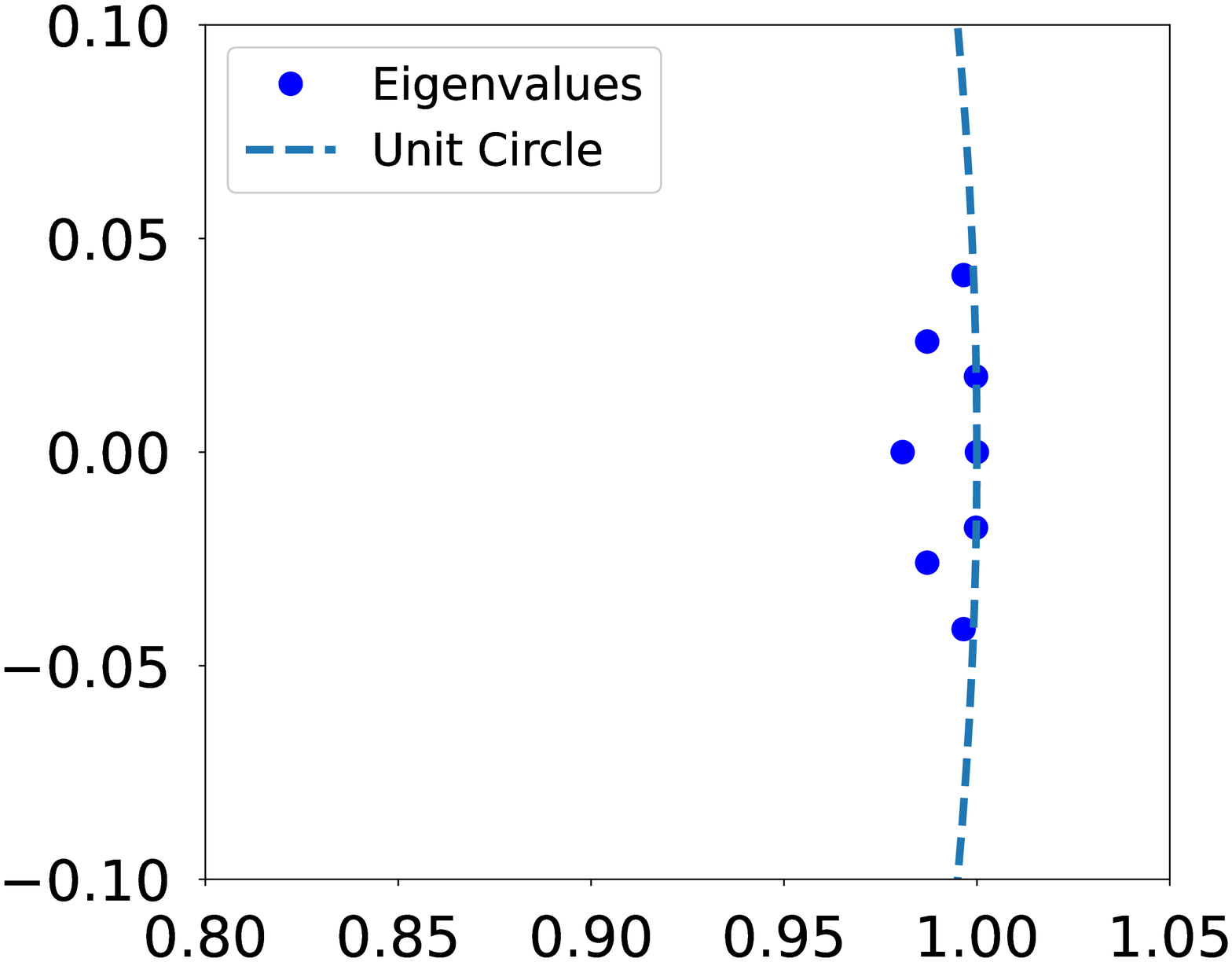}
	\caption{DLDMD eigenvalues for the Van der Pol trajectory corresponding to Figure \ref{fig:vdp_fft}.  Those eigenvalues inside the unit circle indicate transient phenomena, reflecting the transient behavior in the underlying dynamics.}
	\label{fig:vdp_eigs}
\end{figure}

\subsection{The Lorenz-63 system: Chaos}\label{sec:lorenz}
The Lorenz-63 system, described by the equations
\begin{align*}
	\dot{x}_{1} &= \sigma (x_{2} - x_{1}), \\[1ex]
	\dot{x}_{2} &= x_{1} (\rho - x_{3}) - x_{2} \\[1ex]
	\dot{x}_{3} &= x_{1}x_{2} - \beta x_{3},
\end{align*}
with parameters $\sigma=10$, $\rho=28$, and $\beta=8/3$ generates chaotic trajectories with a strange attractor. This system provides categorically different dynamics than the previous three examples, but the DLDMD is able to discover the attractor structure even though its overall pointwise prediction is poor; see Figure \ref{fig:lorenz_recon}. This is seen more readily in Figure \ref{fig:lorenz_preds} by examining each component of the test versus predicted trajectory.  We see, in particular, that the DLDMD predicted trajectory exhibits a lobe switching pattern that is close to that of the test trajectory. This is an especially pleasing result given that the model was trained on trajectories of length $t_{f}=3$, while the test trajectory shown here was extended to twice that to $t_{f}=6$. The latent dimension used to get these results was $N_{\text{o}}=4$. This is somewhat surprising in that a system with chaotic trajectories only required one additional dimension in order to obtain decent global linear representations via EDMD whereas the limit cycle of Van der Pol required upward of six more embedding dimensions. The choice of $N_{o}$ was quite inflexible compared to that used for the Van der Pol equation, with the total loss during training increasing by several orders of magnitude for $N_{o}\ge 5$. 

Moreover, as seen by comparing Figures \ref{fig:lorenz_fft} (a) and (b), the DLDMD finds a set of latent-space coordinates for the Lorenz-63 system that, while no longer monochromatic due to clearly visible two-frequency or {\it beating} phenomena, are far more sparsely represented in their affiliated Fourier spectrum than the original phase-space coordinates.  That said, the $\tilde{x}_{3}$ coordinate would seem to track some aperiodic behavior in the dynamics, though longer simulation times would be necessary to determine what exactly is being represented via this transformation. Overall though, these plots further reinforce the result that the DLDMD can generally find embeddings in which the Fourier spectral representation of a given trajectory is far more sparse.  Likewise, as seen in Figure \ref{fig:lorenz_eigs}, we see the DLDMD again finds eigenvalues either on or nearly on the unit circle, reflecting the largely oscillatory behavior of the latent-space trajectories.  For those eigenvalues just inside the unit circle, the implied weak transient behavior could be more of an artifact of limited simulation time.  Resolving this issue is a subject of future work.  

\begin{figure}
    \hspace*{-1cm}
	\centering
	\includegraphics[width=1\linewidth]{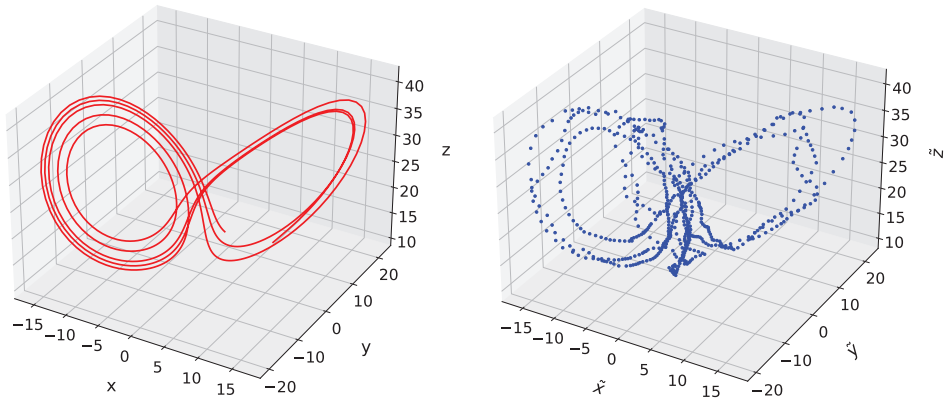}
	\caption{Test trajectory (solid) and DLDMD prediction (dotted) on the Lorenz-63 system. The pointwise MSE loss of the trajectory shown is $1.79$, so while poor in point-to point prediction, the DLDMD is able to approximate the strange attractor structure in the original phase-space coordinate system.}
	\label{fig:lorenz_recon}
\end{figure}

\begin{figure}
    \hspace*{-1cm}
	\centering
	\includegraphics[width=0.85\linewidth]{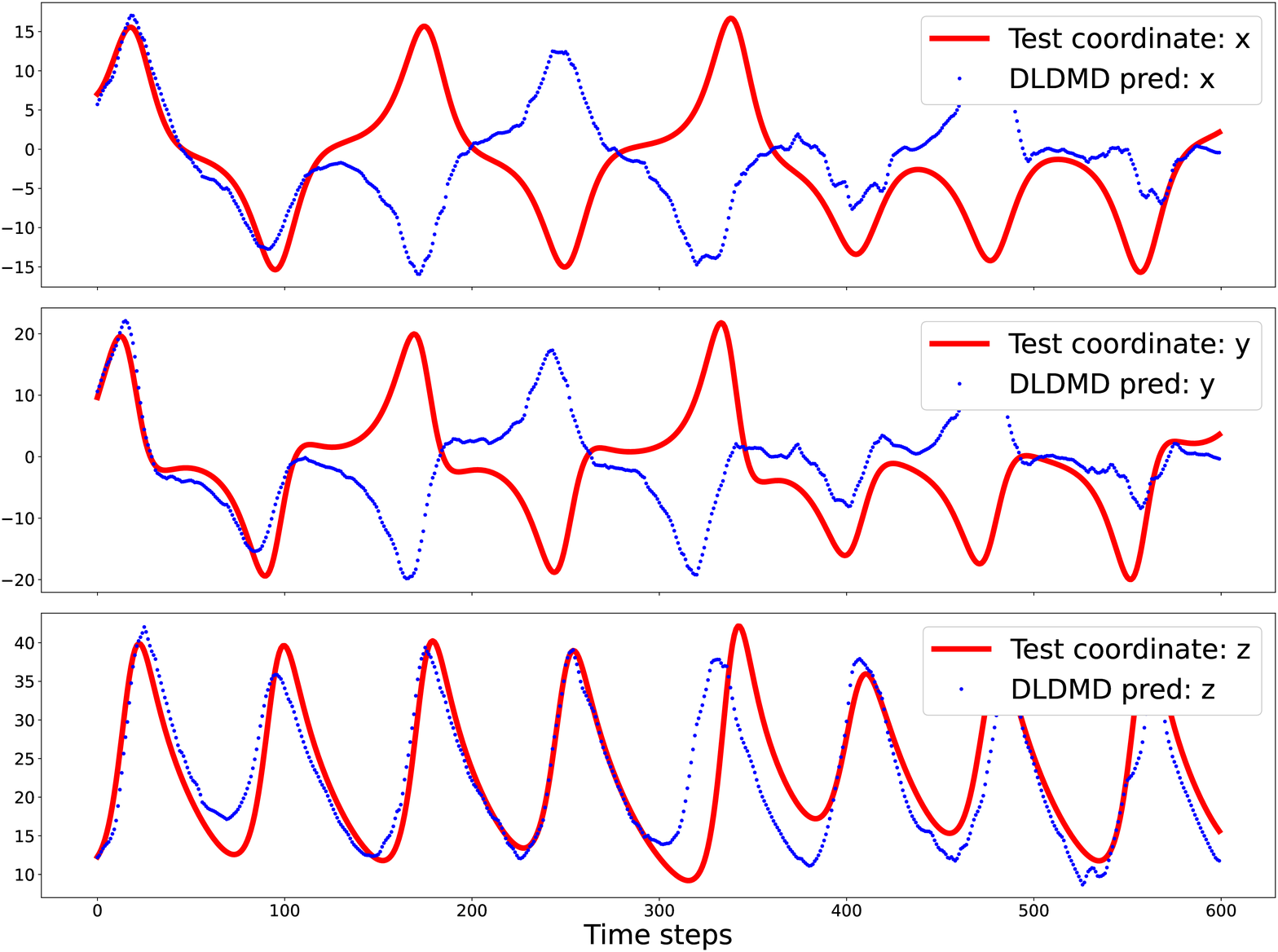}
	\caption{Each component of the test trajectory (solid) and DLDMD prediction (dotted) on the Lorenz-63 system. The DLDMD model was trained on trajectories with 300 time steps, while the predicted trajectories shown here are taken 600 time steps forward.}
	\label{fig:lorenz_preds}
\end{figure}

\begin{figure}
	\centering
	\begin{tabular}{c}
		\includegraphics[width=0.75\linewidth]{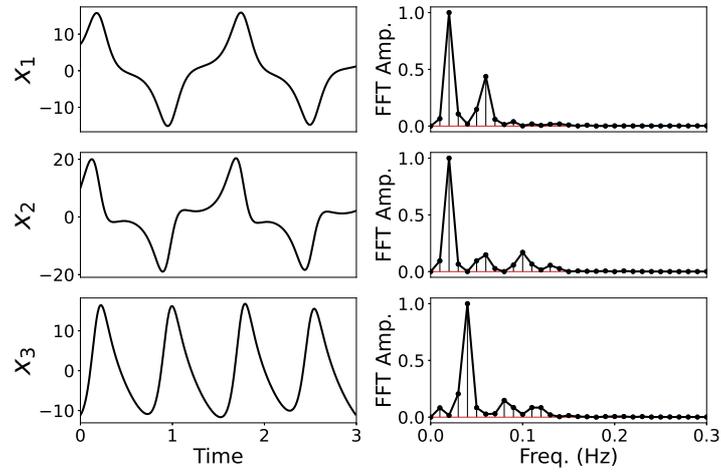}\\ (a) \\
		\includegraphics[width=0.75\linewidth]{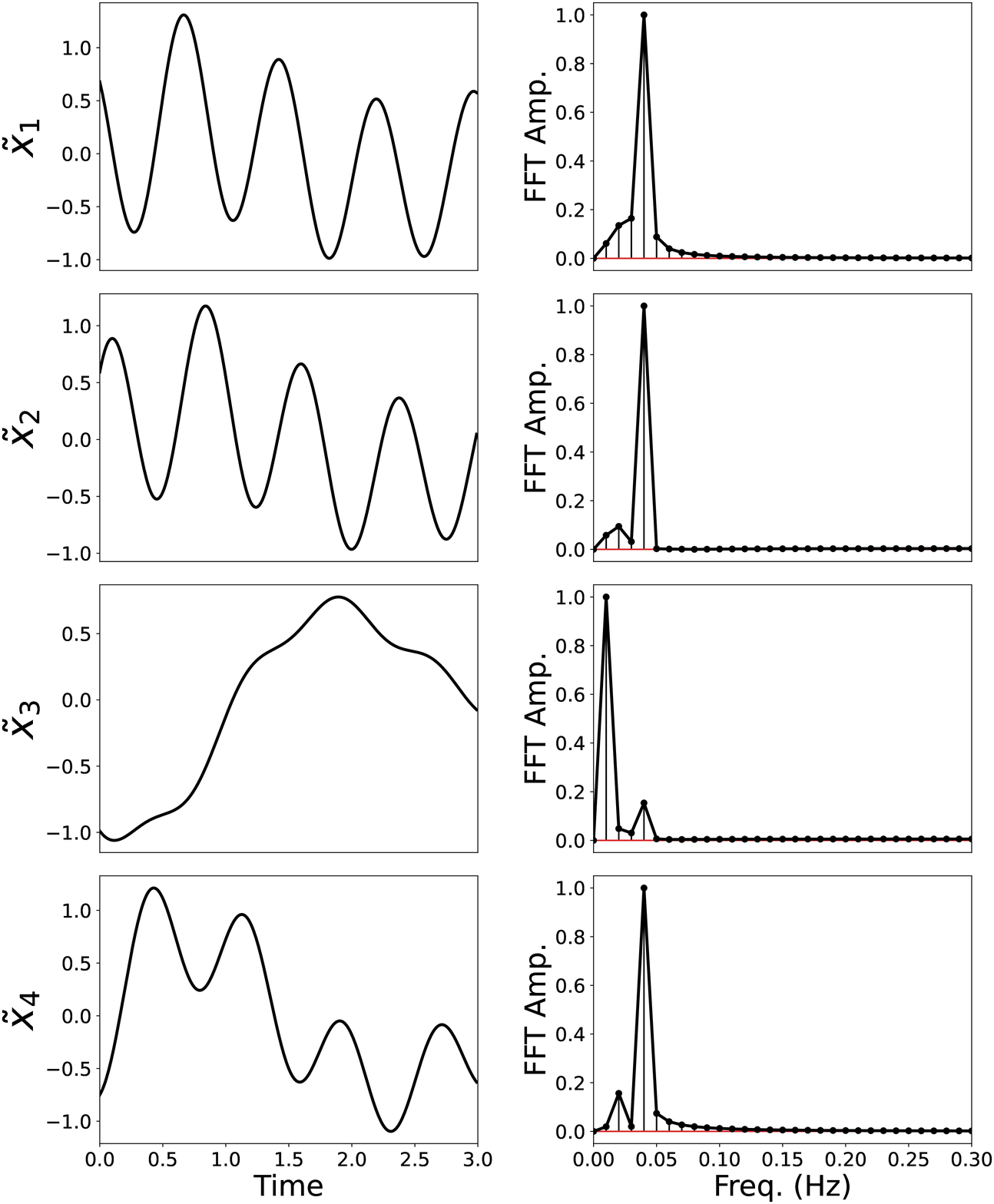}\\ (b)
	\end{tabular}
	\caption{Phase-space coordinates (a) and latent-space coordinates  (b) along with their affiliated normalized FFTs for the Lorenz 63 system. The test trajectory depicted in panel (a) corresponds to the trajectory in Figure \ref{fig:lorenz_recon}.}
	\label{fig:lorenz_fft}
\end{figure}

\begin{figure}
    \hspace*{-1cm}
	\centering
	\includegraphics[width=0.55\linewidth]{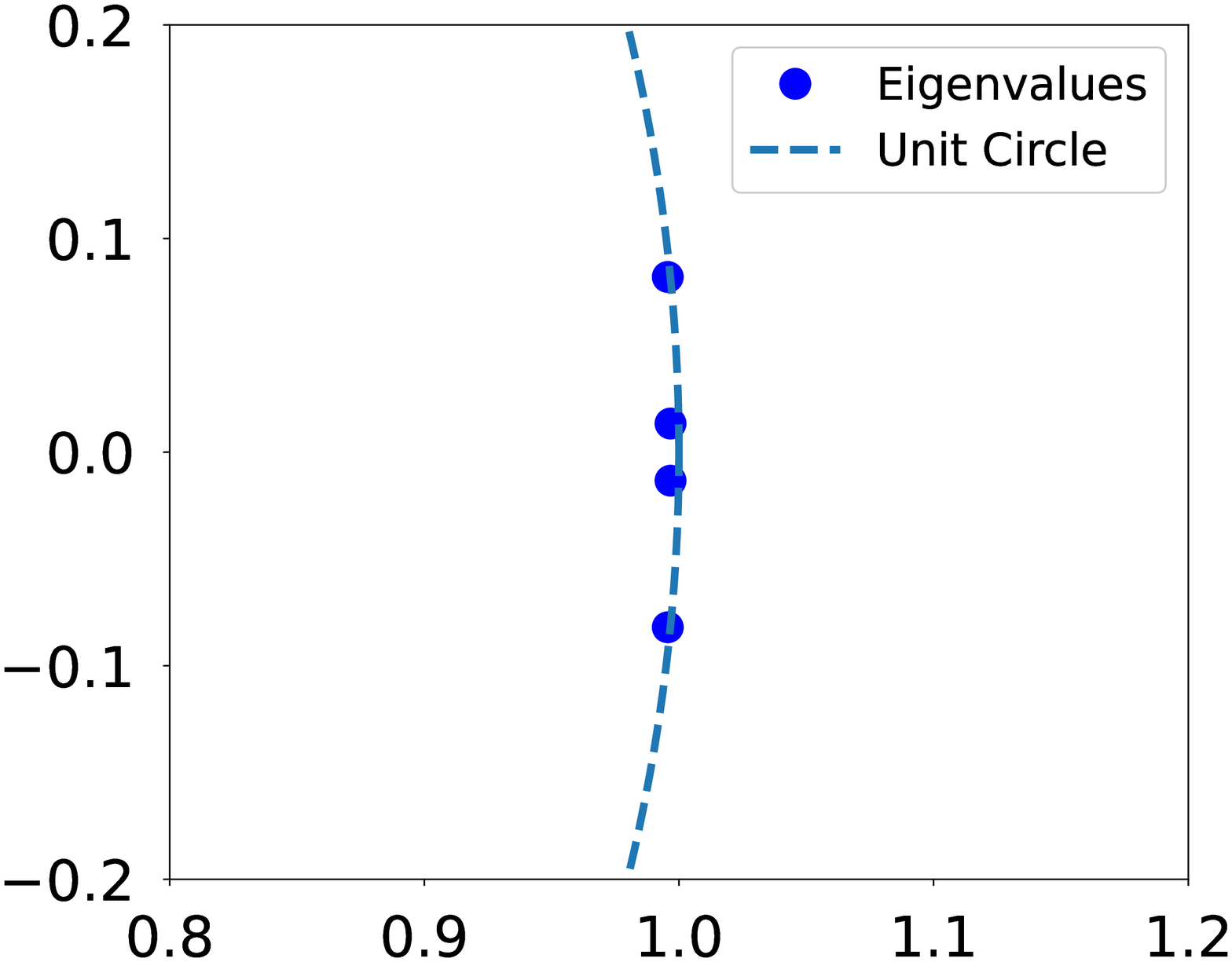}
	\caption{DLDMD eigenvalues for the Lorenz-63 trajectory corresponding to Figure \ref{fig:lorenz_fft}.  While mostly oscillatory, there is indication of very slowly decaying transient phenomena due to the two eigenvalues just inside the unit circle.}
	\label{fig:lorenz_eigs}
\end{figure}

\section{\label{sec:conclusionsfuture}Conclusions and Future Directions}
In this paper, we have developed a deep learning extension to the EDMD, dubbed DLDMD, which ensures, through the inclusion in our loss function of the terms $\mathcal{L}_{\text{dmd}}$ and $\mathcal{L}_{\text{pred}}$, both the local one step accuracy and global stability, respectively, in the generated approximations. This keeps us numerically close to satisfying the requirements stated in Ansatz \eqref{ansz1}, which is necessary for the success of the EDMD in the latent variables.  Likewise, by constructing a loss function to train the autoencoder using the diagram in Figure \ref{fig:comm_diag}, the DLDMD learns mappings to and from the latent space which are one-to-one. Thus we ensure that all Koopman modes and eigenvalues are captured in this latent space as well.  These results taken together ensure that the DLDMD finds a global linearization of the flow, facilitating a straightforward spectral characterization of any analyzed trajectory and an accurate prediction model for future dynamics.

\begin{table}[]
\centering
\begin{tabular}{|c|c|c|c|c|}
\hline
\textbf{System} & \textbf{Avg. Loss (MSE)} & \ ${\bf N_{o}}$ \ & \textbf{\# Params} & \textbf{sec./epoch} \\ \hline
Pendulum    & $3.69 \times 10^{-3}$ & 2 & 67K & 6  \\ \hline
Duffing     & $3.47 \times 10^{-3}$ & 3 & 67K & 8  \\ \hline
Van der Pol & $2.87 \times 10^{-2}$ & 8 & 68K & 14 \\ \hline
Lorenz 63   & $1.79 \times 10^{0}$  & 3 & 68K & 7  \\ \hline
\end{tabular}
\caption{Tabulated results for the DLDMD for each dynamical system presented. The loss for each is computed as the MSE of the reconstructed trajectory and averaged over all 2,000 paths in the test data set. The size of the embedding dimension, number of trainable parameters, and training wall-time in seconds per epoch is provided for each.}
\label{tab:table_results}
\end{table}

Moreover, we have developed a ML approach which requires a relatively minimal number of assumptions to ensure successful training. We have demonstrated this across a relatively wide array of dynamical phenomena including chaotic dynamics; see Table \ref{tab:table_results} for a performance summary of the DLDMD across all presented examples. We are able to cover so much ground by way of implementing relatively straightforward higher-dimensional embeddings via the encoder network, $\mathcal{E}$.  The latent-space coordinates of each trajectory generally show that the encoder produces modes with near monochromatic Fourier spectra; again see Figures \ref{fig:pen_fft}, \ref{fig:duff_fft}, \ref{fig:vdp_fft}, and \ref{fig:lorenz_fft}. This is an especially compelling result of the DLDMD, and it speaks to the power of deep learning methods that such elegant representations can be discovered with so little user guidance.  However, by lifting into higher-dimensional latent spaces we do lose the strict topological equivalence in \cite{bollt1} and cannot guarantee that all spectra are invariant. This issue does not seem to manifest in any measurable way in our results, but it should be kept in mind when pursuing future work.  

The DLDMD is not without other drawbacks. In particular, the latent dimension parameter $N_{o}$ is critical to the success or failure of the DLDMD. Unfortunately, there is no readily apparent method for choosing this embedding dimension before training. Therefore, the optimal $N_{o}$ had to be determined by simply training the model at successively larger values for $N_{o}$ and stopping once the error became too large or grew unstable. This approach is time consuming and an obvious disadvantage to the method. Determining more optimal ways to approach the latent dimensionality will be the subject of future research. 

Another drawback of the DLDMD is the number of trajectories used during training. For each example we generated 10,000 initial conditions sampled uniformly over the phase space of interest for training. Rarely do real-world datasets provide such a uniform sampling of the space. Methods to cope with sparse observations could potentially add far more utility to the method. Very high-dimensional systems that exhibit low-rank structure also present problems for DLDMD and the more conventional use of autoencoder networks for compression rather than lifting \cite{lusch2, azencot} may be more applicable. The chaotic trajectories of the Lorenz-63 system were clearly the most challenging for DLDMD to reproduce. This is evident in the low average MSE loss and in the relatively poor agreement between the test and predicted trajectory. For this reason, chaotic systems will likely require additional innovations, such as incorporating delay embeddings into the EDMD step \cite{arbabi1} and is the topic of future work.

Lastly, systems with noisy observations will pose significant challenges to DLDMD as it is reliant on the one-step prediction to enforce consistency in the latent spectra. In this regard, the method would seem to benefit from some of the alternative DMD approaches as found in \cite{brunton1}, an issue we will also pursue in future work.

\section*{Data Availability Statement}

The data that support the findings of this study are openly available at\\ https://github.com/JayLago/DLDMD.git

\section*{\label{sec:acknowledgements}Acknowledgements}
This work was supported in part by the Naval Information Warfare Center Pacific (NIWC Pacific) and the Office of Naval Research (ONR).  

The authors would like to thank Dr. Erik M. Bollt from the Department of Electrical and Computer Engineering at Clarkson University for his feedback and insights, as well as Joseph Diaz and Robert Simpson for their insightful comments.

\bibliography{ms}

\begin{thebibliography}{10}

\bibitem{koopman}
B.O. Koopman.
\newblock Hamiltonian systems and transformation in {H}ilbert space.
\newblock {\em Proc. Nat. Acad. Sci.}, 17:315--318, 1931.

\bibitem{wilcox}
P.~Benner, S.~Gugercin, and K.~Willcox.
\newblock A survey of projection-based model reduction methods for parametric
  dynamical systems.
\newblock {\em SIAM Review}, 57:483--531, 2015.

\bibitem{taira}
K.~Taira, S.L. Brunton, S.T.M. Dawson, C.W. Rowley, T.~Colonius, B.J. McKeon,
  O.T. Schmidt, S.~Gordeyev, V.~Theofilis, and L.S. Ukeiley.
\newblock Modal analysis of fluid flows: An overview.
\newblock {\em AIAA}, 55:4013--4041, 2017.

\bibitem{schmid}
P.~Schmid.
\newblock Dynamic mode decomposition of numerical and experimental data.
\newblock {\em J. Fluid Mech.}, 656:5--28, 2010.

\bibitem{mezic1}
I.~Mezi\'{c}.
\newblock Spectral properties of dynamical systems, model reduction, and
  decompositions.
\newblock {\em Nonlinear Dyn.}, 41:309--325, 2005.

\bibitem{williams}
M.O. Williams, I.~G. Kevrekidis, and C.~W. Rowley.
\newblock A data-driven approximation of the {K}oopman operator: extending
  dynamic mode decomposition.
\newblock {\em J. Nonlin. Sci.}, 25:1307--1346, 2015.

\bibitem{williams2}
M.O. Williams, C.~W. Rowley, and I.~G. Kevrekidis.
\newblock A kernel-based method for data driven {K}oopman spectral analysis.
\newblock {\em J. Comp. Dyn.}, 2:247--265, 2015.

\bibitem{kutz2}
S.L. Brunton, J.L. Proctor, and J.N. Kutz.
\newblock Discovering governing equations from data by sparse identification of
  nonlinear dynamical systems.
\newblock {\em PNAS}, 113:3932--3937, 2016.

\bibitem{kutz}
J.N. Kutz, S.L. Brunton, B.W. Brunton, and J.L. Proctor.
\newblock {\em Dynamic Mode Decomposition: Data-driven modeling of complex
  systems}.
\newblock SIAM, Philadelphia, PA, 2016.

\bibitem{kutz3}
J.N. Kutz, J.L. Proctor, and S.L. Brunton.
\newblock Applied {K}oopman theory for partial differential equations and
  data-driven modeling of spatio-temporal systems.
\newblock {\em Complexity}, 2018:1--16, 2018.

\bibitem{bollt2}
Q.~Li, F.~Dietrich, E.~Bollt, and I.~Kevrekidis.
\newblock Extended dynamic mode decomposition with dictionary learning: {A}
  data-driven adaptive spectral decomposition of the {K}oopman operator.
\newblock {\em Chaos}, 27:103111, 2017.

\bibitem{takeishi}
N.~Takeishi, Y.~Kawahara, and T.~Yairi.
\newblock Learning {K}oopman invariant subspaces for dynamic mode
  decomposition.
\newblock In I.~Guyon, U.~V. Luxburg, S.~Bengio, H.~Wallach, R.~Fergus,
  S.~Vishwanathan, and R.~Garnett, editors, {\em Advances in Neural Information
  Processing Systems}, volume~30. Curran Associates, Inc., 2017.

\bibitem{yeung}
E.~Yeung, S.~Kundu, and N.O. Hodas.
\newblock Learning deep neural network representations for {K}oopman operators
  of nonlinear dynamical systems.
\newblock In {\em American Control Conference}, 2019.

\bibitem{degennaro}
A.~M. Degenarro and N.~M. Urban.
\newblock Scalable extended dynamic-mode decomposition using random kernel
  approximation.
\newblock {\em SIAM J. Sci. Comp.}, 41:1482--1499, 2019.

\bibitem{budisic}
M.~Budisi\'{c}, R.~Mohr, and I.~Mezi\'{c}.
\newblock Applied koopmanism.
\newblock {\em Chaos}, 22(047510), 2012.

\bibitem{bollt1}
E.~Bollt, Q.~Li, F.~Dietrich, and I.~Kevrekidis.
\newblock On matching, and even rectifying, dynamical systems through koopman
  operator eigenfunctions.
\newblock {\em SIAM J. Appl. Dyn. Sys.}, 17.2:1925--1960, 2018.

\bibitem{lusch}
B.~Lusch, J.~N. Kutz, and S.~L. Brunton.
\newblock Deep learning for universal linear embeddings of nonlinear dynamics.
\newblock {\em Nature Comm.}, 9:4950, 2018.

\bibitem{bramburger2021deep}
Jason~J. Bramburger, Steven~L. Brunton, and J.~Nathan Kutz.
\newblock Deep learning of conjugate mappings.
\newblock {\em arXiv}, 2104.01874, 2021.

\bibitem{gilpin}
W.~Gilpin.
\newblock Deep reconstruction of strange attractors from time series.
\newblock In {\em 34th Conference on NeurIPS}, 2020.

\bibitem{lusch2}
K.~Champion, B.~Lusch, J.~N. Kutz, and S.~L. Brunton.
\newblock Data-driven discovery of coordinates and governing equations.
\newblock {\em PNAS}, 116:22445--22451, 2019.

\bibitem{mardt}
Andreas Mardt, Luca Pasquali, Hao Wu, and Frank No{\'e}.
\newblock Vampnets for deep learning of molecular kinetics.
\newblock {\em Nature Communications}, 9(1):5, 2018.

\bibitem{erichson-2019}
N.~B. Erichson, M.~Muehlebach, and M.~Mahoney.
\newblock Physics-informed autoencoders for lyapunov-stable fluid flow
  prediction.
\newblock In {\em Machine Learning and the Physical Sciences Workshop,
  Conference on Neural Information Processing Systems}, 2019.

\bibitem{azencot}
O.~Azencot, N.~B. Erichson, V.~Lin, and M.~Mahoney.
\newblock Forecasting sequential data using consistent koopman autoencoders.
\newblock In Hal~Daum{\'e} III and Aarti Singh, editors, {\em Proceedings of
  the 37th International Conference on Machine Learning}, volume 119 of {\em
  Proceedings of Machine Learning Research}, pages 475--485. PMLR, 13--18 Jul
  2020.

\bibitem{Li2020Learning}
Yunzhu Li, Hao He, Jiajun Wu, Dina Katabi, and Antonio Torralba.
\newblock Learning compositional {K}oopman operators for model-based control.
\newblock In {\em International Conference on Learning Representations}, 2020.

\bibitem{mezic4}
I.~Mezi\'{c}.
\newblock Spectrum of the koopman operator, spectral expansions in functional
  spaces, and state-space geometry.
\newblock {\em Journal of Nonlinear Science}, 30.5:2091--145, 2019.

\bibitem{lasota}
A.~Lasota and M.~C. Mackey.
\newblock {\em Chaos, Fractals, and Noise}.
\newblock Springer, New York, NY, 2nd edition, 1994.

\bibitem{mezic5}
M.~Budivsi\'{c}, R.~Mohr, and I.~Mezi\'{c}.
\newblock Applied koopmanism.
\newblock {\em chaos}, 22:047510, 2012.

\bibitem{gonzalez2021antikoopmanism}
Efrain Gonzalez, Moad Abudia, Michael Jury, Rushikesh Kamalapurkar, and Joel~A.
  Rosenfeld.
\newblock Anti-koopmanism, 2021.

\bibitem{abraham}
R.~Abraham, J.E. Marsden, and T.~Ratiu.
\newblock {\em Manifolds, Tensor Analysis, and Applications}.
\newblock Springer, New York, NY, 2nd edition, 2001.

\bibitem{arbabi1}
H.~Arbabi and I.~Mezi\'{c}.
\newblock Ergodic theory, dynamic mode decomposition, and computation of
  spectral properties of the koopman operator.
\newblock {\em SIAM Journal on Applied Dynamical Systems}, 16:2096--2126, 2017.

\bibitem{brunton1}
S.~L. Brunton, B.~W. Brunton, J.~L. Proctor, E.~Kaiser, and J.~N. Kutz.
\newblock Chaos as an intermittently forced system.
\newblock {\em Nature Comm.}, 8(1), 2017.

\end{thebibliography}
\bibliographystyle{unsrt}
\end{document}